\documentclass{article}

\usepackage{microtype}
\usepackage{graphicx}
\usepackage{subcaption}
\usepackage{booktabs} 

\usepackage{hyperref}

\usepackage[accepted]{icml2026}

\usepackage{amsmath}
\usepackage{amssymb}
\usepackage{mathtools}
\usepackage{amsthm}
\usepackage{multirow}
\usepackage{bbm}
\usepackage{enumitem}
\usepackage{float}
\usepackage{array}
\usepackage{subcaption}
\usepackage{listings}
\usepackage[dvipsnames]{xcolor}
\usepackage[most]{tcolorbox}
\tcbuselibrary{breakable}
\definecolor{myred}{RGB}{200, 0, 0}
\definecolor{myblue}{RGB}{0, 0, 200}
\definecolor{mygreen}{RGB}{0, 150, 0}
\usepackage{pifont}

\usepackage[capitalize,noabbrev]{cleveref}
\crefname{figure}{Fig.}{Figs.}
\crefname{table}{Table}{Tables}
\crefname{section}{Section}{Sections}
\crefname{appendix}{Appendix}{Appendices}
\crefname{equation}{Eq.\!}{Eqs.\!}

\newcommand{\smallcoloneqq}{\mathrel{\scalebox{0.8}{$\coloneqq$}}}
\newcommand{\smallsim}{\mathrel{\scalebox{0.79}{$\sim$}}}
\newcommand{\smalloplus}{\mathrel{\scalebox{0.79}{$\oplus$}}}

\theoremstyle{plain}
\newtheorem{theorem}{Theorem}[section]

\theoremstyle{definition}
\newtheorem{definition}[theorem]{Definition}

\theoremstyle{remark}

\usepackage[textsize=tiny]{todonotes}

\icmltitlerunning{Revealing Behavioral Plasticity in Large Language Models: A Token-Conditional Perspective}

\begin{document}

\twocolumn[
  \icmltitle{Revealing Behavioral Plasticity in Large Language Models: \\A Token-Conditional Perspective}



  \icmlsetsymbol{leader}{$\dagger$}

  \begin{icmlauthorlist}
    \icmlauthor{Liyuan Mao}{comp,sch}
    \icmlauthor{Le Yu}{comp}
    \icmlauthor{Jing Zhou}{comp}
    \icmlauthor{Chujie Zheng}{comp}
    \icmlauthor{Bowen Yu}{comp,leader}
    \icmlauthor{Chang Gao}{comp}
    \icmlauthor{Shixuan Liu}{comp}
    \icmlauthor{An Yang}{comp}
    \icmlauthor{Weinan Zhang}{sch}
    \icmlauthor{JunYang Lin}{comp}
  \end{icmlauthorlist}

  \icmlaffiliation{comp}{Qwen Team, Alibaba Group}
  \icmlaffiliation{sch}{Shanghai Jiao Tong University}

  \icmlcorrespondingauthor{Bowen Yu}{yubowen.ph@gmail.com}
  \icmlcorrespondingauthor{Weinan Zhang}{wnzhang@sjtu.edu.cn}

  \icmlkeywords{Machine Learning, ICML}

]



\printAffiliationsAndNotice{}  

\begin{abstract}
In this work, we reveal that Large Language Models (LLMs) possess intrinsic behavioral plasticity—akin to chameleons adapting their coloration to environmental cues—that can be \textit{exposed} through token-conditional generation and \textit{stabilized} via reinforcement learning. Specifically, by conditioning generation on carefully selected token prefixes sampled from responses exhibiting desired behaviors, LLMs seamlessly adapt their behavioral modes at inference time (e.g., switching from step-by-step reasoning to direct answering) without retraining. Based on this insight, we propose \textbf{To}ken-\textbf{Co}nditioned \textbf{R}einforcement \textbf{L}earning (\textbf{ToCoRL}), a principled framework that leverages RL to internalize this chameleon-like plasticity, transforming transient inference-time adaptations into stable and learnable behavioral patterns. ToCoRL guides exploration with token-conditional generation and keep enhancing exploitation, enabling emergence of appropriate behaviors. Extensive experiments show that ToCoRL enables precise behavioral control without capability degradation. Notably, we show that large reasoning models, while performing strongly on complex mathematics, can be effectively adapted to excel at factual question answering, which was a capability previously hindered by their step-by-step reasoning patterns.

\end{abstract}

\section{Introduction}


\begin{figure}[htbp] 
    \centering
    \includegraphics[width=\columnwidth]{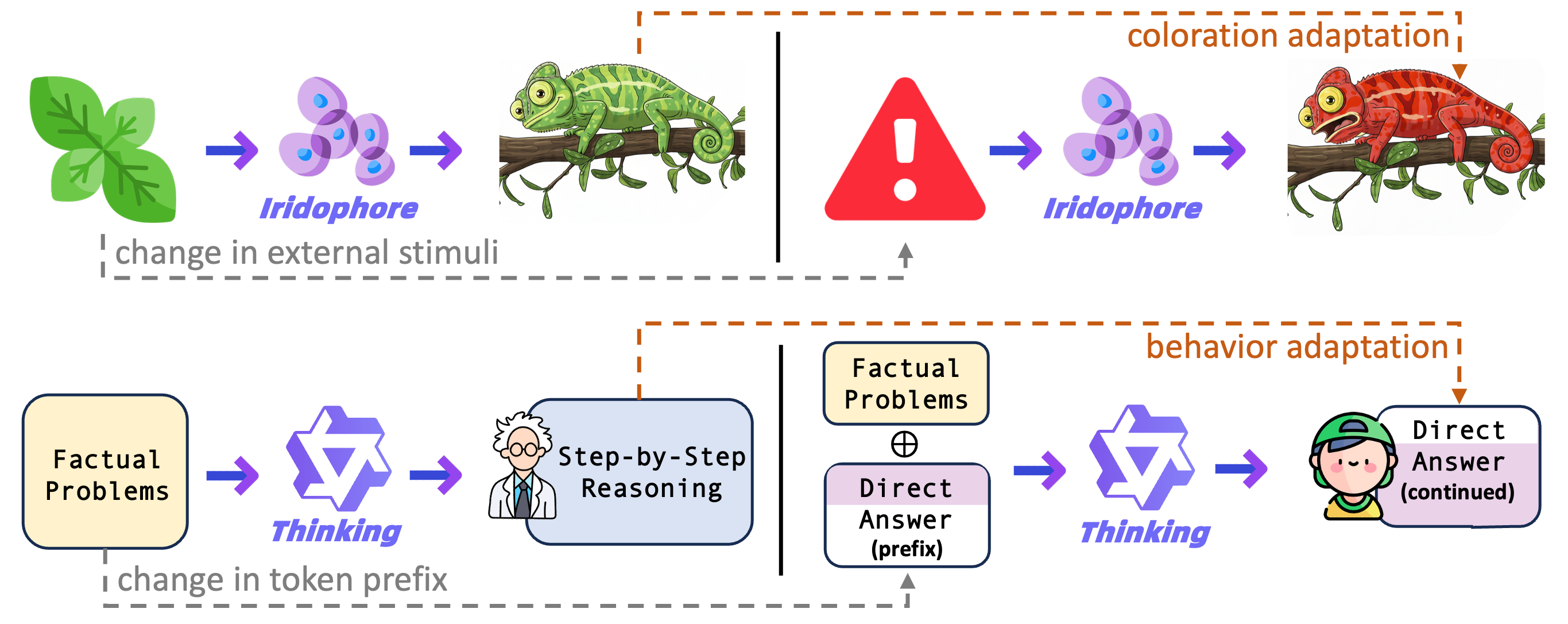} 
    \caption{Like chameleons changing their coloration in response to external stimuli (top), language models can adapt their behavior according to the token prefix (bottom). $\oplus$ stands for concatenation.}
    \label{fig:chameleon_demonstration}
\end{figure}

Chameleons possess a remarkable ability to adapt their appearance in response to environmental stimuli: shifts in light, temperature, or surroundings can trigger dynamic changes in their coloration. This adaptive behavior reflects an intrinsic plasticity: the capacity exists in the organism, waiting to be expressed when appropriate cues are present. Similar to chameleons, Large Language Models (LLMs) also exhibit such plasticity \cite{brown2020language, ouyang2022training}, which serves as the cornerstone for building variant specialized models with diverse behaviors \cite{gururangan2020don, openai2025o3, yang2025qwen3, guo2025deepseek}.

Typically, controllable behavioral steering of LLMs relies on parameter-update methods, such as supervised fine-tuning or preference optimization \citep{radford2018improving, grattafiori2024llama, rafailov2023direct}. These approaches merely elicit or amplify pre-existing behavioral patterns already embedded in the training data \citep{chung2024scaling, yang2025qwen3}, rather than generating genuinely novel behaviors. In stark contrast, we uncover a distinct form of behavioral plasticity in LLMs that requires \textit{no} parameter updates at inference time yet enables the emergence of behaviors extending \textit{beyond} those implicitly present in the training data. This plasticity can be systematically exposed through token-conditional generation and subsequently stabilized via reinforcement learning, revealing latent capacities not directly encoded in the original training distribution.

Specifically, by conditioning the generation process on carefully selected token prefixes---derived from responses exhibiting desired behaviors---the language model can adapt its behavior accordingly, without any parameter modification. Take Large Reasoning Models (LRMs) as a concrete example. These models excel at complex mathematical problem-solving through step-by-step reasoning \cite{guo2025deepseek, openai2025o3}. However, they often performs significantly worse than their instruct counterparts on straightforward factual questions, which directly answer the questions \cite{yang2025qwen3}. By conditioning response generation on prefixes from such direct answers (as shown in \cref{fig:demonstration for token conditional generation and influence of ToCoRL}), the same LRM shifts from step-by-step reasoning to direct knowledge retrieval. This uncovers a fundamental property of auto-regressive language models: behaviors emerge dynamically through the interaction between model parameters and the generated context. The initial tokens establish a behavioral trajectory that the model can naturally continue, suggesting that LLMs function as adaptive systems where outputs reflect both learned knowledge and contextual behavioral cues. When examined on a concrete model, such behavior adaptation instantly improves the accuracy of Qwen3-30B-A3B-2507-Thinking on SimpleQA benchmark (from 18.9 to 20.7), suggesting the potential to build a general-purpose language model that can adopt the most suitable behavior for different types of problems.


While token-conditional generation effectively adapts the behavior, relying on it at inference time requires access to appropriate prefix examples. Besides, the adaptive behavior is transient and unstable, depending on external conditioning signals. To transform this exposed plasticity into persistent capability, we propose \textbf{To}ken-\textbf{Co}nditional \textbf{R}einforcement \textbf{L}earning (\textbf{ToCoRL}), a framework that \textit{stabilizes} the chameleon-like adaptability through reinforcement learning. ToCoRL internalizes token-conditional behavioral control in a principled manner, enabling models to autonomously execute appropriate behaviors without inference-time guidance. Unlike standard RLVR process \cite{shao2024deepseekmath, williams1992simple}, ToCoRL incorporates token-conditional generation in the roll-out stage. Apart from the policy-improvement objective, ToCoRL leverages these token-guided rollouts to compute a customized KL divergence that shapes exploration toward desired behaviors while still enhancing exploitation. Consequently, ToCoRL induces the emergence of the most suitable behavior and further stabilizes it during the RL process, transforming ephemeral adaptations into persistent capabilities that models learn to invoke autonomously.


To demonstrate the effectiveness of our approach, we focus on a challenging test case: adapting LRMs to excel at both complex mathematical reasoning and factual question answering---two tasks requiring fundamentally different behavioral strategies. Training with ToCoRL leads to the emergence of a novel factual-answering behavior, which shares the best of explicit reasoning and direct-answering behavior. Starting from Qwen3-30B-A3B-2507-Thinking, ToCoRL improves accuracy on the SimpleQA benchmark from 18.9 to 28.3. Moreover, the emergent behavior coexists with mathematical reasoning capabilities without interference, improving accuracy on the AIME’25 benchmark from 80.5 to 81.5, thereby demonstrating that diverse behaviors can be stabilized within a unified model. Through extensive experiments and analysis, we provide strong evidence that ToCoRL is necessary for the emergence of such behavior and shed light on the underlying mechanisms. Moreover, when building a versatile LRM from a base model, the discovered behavior pattern can be easily transferred through supervised fine-tuning. 



\begin{figure*}[t] 
\centering
\includegraphics[width=\textwidth]{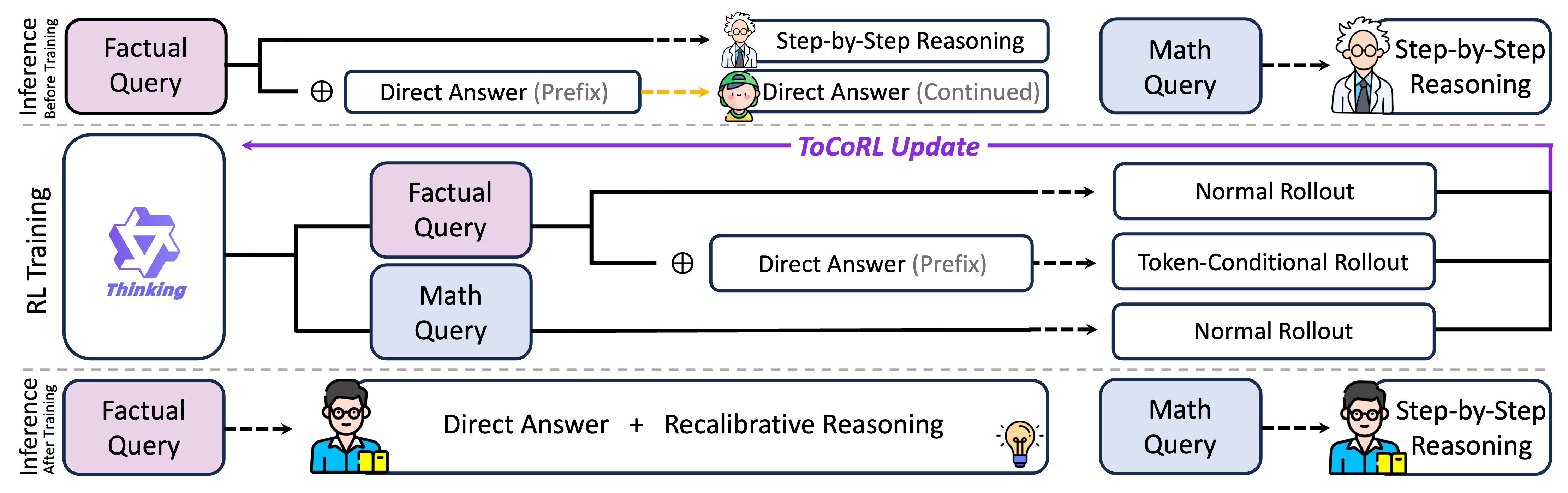}
\caption{\textbf{Top}: Conditioning response generation on a direct-answer prefix enables LRM to switch from step-by-step reasoning to direct knowledge retrieval, revealing a new perspective on language model behavioral plasticity.
\textbf{Middle}: ToCoRL leverages token-conditional generation to guide exploration during RL training, inducing the emergence of new factual answering behaviors.
\textbf{Bottom}: After an initial direct answer, the ToCoRL-trained model adopts an emergent recalibrative reasoning behavior for factual problems (see \cref{subsection: Underlying Mechanisms of Performance and Behavioral Emergence}), while its complex math problem–solving behavior remains unchanged.
We provide concrete examples for further illustration in \cref{fig: detailed demonstration (with example) for token conditional generation and influence of ToCoRL}.}
\label{fig:demonstration for token conditional generation and influence of ToCoRL}
\end{figure*}

Our work establishes token-conditional generation as a fundamental lens for understanding behavioral adaptability in LLMs. Different specialized models are not entirely distinct entities; their performance differences do not necessarily stem from gaps in knowledge or capability, but may instead arise from differences in behavioral patterns and how models approach problems. This suggests a paradigm shift: instead of training separate specialized models, we can program diverse behaviors within unified models by learning to control the token-level patterns that guide behavioral expression. By revealing and harnessing this chameleon-like adaptability, we move closer to truly general-purpose AI systems that flexibly adapt their problem-solving strategies to match diverse task demands.

\section{Preliminaries}
\textbf{Notations in RL for LLMs.} LLMs have long been viewed as policies and optimized via RL \cite{ouyang2022training, shao2024deepseekmath, yang2025qwen3}, with $\pi$ commonly used to denote the LLM policy. There are two common definitions of actions: sequence-level action and token-level action. The former treats the entire response sequence as an action $a$, with the response space as the action space $\mathcal{A}$. The latter treats each token as an action, with the vocabulary as the action space. In this paper, we focus on the sequence-level action setting, where the state $s$ corresponds to the complete prompt and the state space $\mathcal{S}$ consists of diverse prompts. The reward $r(s,a)$ maps a prompt–response pair to a scalar value, and the value function $V^{\pi}(s)$ and advantage function $A^{\pi}(s,a)$ are defined as follows,

\begin{equation*}
    V^{\pi}(s)=\mathbb{E}_{a\sim\pi}[r(s, a)], ~A^{\pi}(s, a)=r(s, a)-V^{\pi}(s).
\end{equation*}

\textbf{REINFORCE Algorithm.} Being a classic on-policy RL algorithm, REINFORCE was first developed in traditional RL \cite{williams1992simple, sutton1999policy} and later shown effective for LLMs \cite{ahmadian2024back}. In traditional RL settings, REINFORCE derives its objective by taking the derivative of the expected return $\mathcal{J}(\pi_\theta)$ with respect to the parameters $\theta$, where $\mathcal{J}(\pi_\theta)$ is defined as follows. Here, $\tau$ denotes the rollout trajectory $(s_0, a_0), \ldots, (s_T, a_T)$, and $r_t$ is the reward at step $t$. The original REINFORCE objective, denoted $J(\theta)$, is also given below. We note that throughout this paper, a policy without a subscript denotes a detached version of its subscripted counterpart (e.g., $\pi$ is a detached $\pi_\theta$). In the sequence-level action setting, the trajectory length is 1 for single-turn dialogues, so only the first step needs to be considered in this objective,
\vspace{-0.05em}
\begin{align*}
    \mathcal{J}(\pi_\theta)&=\mathbb{E}_{\tau\sim \pi(\cdot|s)}[\sum_{t=0}^T\gamma^t\cdot r_t],\\
    J(\theta)&=\mathbb{E}_{\tau \sim \pi(\cdot|s)}[\sum_{t=1}^TA^{\pi}(s, a)\log\pi_\theta(a|s)].
\end{align*}

\section{Exposing Behavioral Plasticity via Token-Conditional Generation}
\label{section: Exposing Behavioral Plasticity via Token-Conditional Generation}

To better demonstrate the behavioral plasticity of LLMs, we choose a language model whose behavior is already highly specialized. This shows that even when a model’s behavior has been strongly specialized, its behavioral plasticity is still preserved and can be exposed with appropriate methods. Therefore, we adopt Qwen3-30B-A3B-2507-Thinking, a widely used LRM designed for complex reasoning tasks. It conducts step-by-step reasoning (wrapped by \texttt{<}think\texttt{>} and \texttt{<}/think\texttt{>}), which is exploratory in nature and often involves trial-and-error or associative content, before producing the final answer. As shown in \cref{tab: Comparison of thinking model and instruct model}, such reasoning process boosts its capability to solve complex math problems by decomposing them into a sequence of simpler subproblems. On the other hand, this LRM performs much worse on factual questions compared with its instruct counterpart, Qwen3-30B-A3B-2507-Instruct. Unlike the Thinking model, the Instruct model directly answers the questions without additional reasoning.

We introduce \textbf{token-conditional generation} to steer the behavior of the Thinking model. Specifically, given a factual problem, we first extract the initial tokens of the direct answer generated by the Instruct model. Conditioning reasoning context on this prefix, we then apply the Thinking model to continue generating subsequent tokens. To avoid answer leakage, the number of enforced tokens is sufficiently small while still forming a direct attempt to answer (e.g., 3 tokens). We evaluate the effect of token-conditional generation with SimpleQA \cite{wei2024measuring}, a commonly used factual problem benchmark. The results are reported in \cref{tab: Comparison of different response pattern}. Concrete examples, along with a detailed demonstration of the procedure, are provided in \cref{app: Illustrative Diagram and Concrete Examples of Token-Conditional Generation}.

\begin{table}[h]
\centering
\caption{The following table shows the accuracy (avg@8), the average response length (Avg. length) and medium response length (Med. length) in token number, when tested on SimpleQA benchmark. Note that we use `Normal Gen' and `ToCo Gen' to denote normal generation and token-conditional generation respectively.}
\setlength{\arrayrulewidth}{0.4pt} 
\setlength{\tabcolsep}{0.9mm}
\fontsize{8.9pt}{10.68pt}\selectfont
{
\begin{tabular}{c|c|c|c}
\noalign{\hrule height 1.2pt} 
 & Accuracy & Avg. length & Med. length \\
\hline
The Normal Gen & 18.9\% & 1255 & 1128 \\
\hline
ToCo Gen & 20.7\% & 891 & 477 \\
\noalign{\hrule height 1.2pt} 
\end{tabular}
}

\label{tab: Comparison of different response pattern}
\end{table}

The examples clearly demonstrate that, through token-conditional generation, the Thinking model tends to answer factual questions directly, skipping step-by-step reasoning. \cref{tab: Comparison of different response pattern} shows that once its behavior is shifted, the Thinking model’s performance on factual questions also improves, with significant decrease in the response length. This inference-time editing requires no additional training and introduces no new knowledge, so the performance gains and the decrease of response length can only be attributed to the model’s inherent behavioral plasticity. 

Specifically in this case, the Thinking model’s original reasoning style is not helpful for factual queries and can even hinder knowledge retrieval. Unlike complex math problems that require step-by-step reasoning to bridge the abstract problem and the correct answer, factual questions only require instruction-following skills, and the question-answer mapping usually exists in the pre-training corpus. Without controlling the reasoning process, unnecessary associations and unverified information, including potential hallucinations, may be introduced and mislead the model. Token-conditional generation leverages the behavioral plasticity, adapting the model’s behavior to answer directly and consicely, thereby improving performance.

\section{ToCoRL: Internalize Behavior Adaptation with Guided Exploration}



The aforementioned behavioral plasticity endows a single language model with the potential to exhibit diverse behaviors. Nevertheless, this inference-time operation relies on pre-generated responses, and its effects are unstable and transient; moreover, it does not provide sufficient control over the subsequent tokens after token-forcing. To remedy this, we propose a novel RL-based method, Token-Conditional Reinforcement Learning (ToCoRL), which internalizes the above behavior adaptation, transforming transient behavior shifts into stable, learnable behavioral patterns. 

\subsection{Incorporate Guidance from Token-Conditional Generation via KL-Divergence}
We develop ToCoRL from a KL-constrained RL objective, incorporating guidance from token-conditional generation to discover the most suitable behavioral pattern. KL divergence with a specific policy has long been used to guide exploration in on-policy RL \cite{rajeswaran2017learning, DBLP:journals/corr/abs-2006-09359}. For example, when taking expert policy as the reference, the augmented KL divergence effectively guides the policy toward high-reward regions of the strategy space \cite{peng2018deepmimic}. 
In our setting, although the policy induced by token-conditional generation exhibits desirable behaviors, it does not qualify as an expert policy.
As a result, we formulate our optimization objective below, where the KL divergence is calculated with a customized policy.

{
\fontsize{9.5pt}{11pt}\selectfont
\begin{equation}
\label{eq: kl-regularized objective}
\begin{aligned}
    &\max\underset{{s\sim \mathcal{D}}}{\mathbb{E}} \big[\underset{{a\sim \pi}}{\mathbb{E}}[A^{\pi}(s, a)\log \pi_\theta(a|s)] - \lambda\text{KL}(\tilde{\pi}_{\text{TC}}||\pi_\theta) \big], \\
    &\forall s, a ~~\tilde{\pi}_{\text{TC}}(a|s) = \pi(a|s) + \rho_s\cdot \pi_{\text{TC}}(a|s)\cdot A^{\pi_{\text{TC}}}(s, a), \\
    &~~~~\forall s~~\rho_s= \min_{a \in \mathcal{A},\, A^{\pi_{\text{TC}}}(s, a) < 0}
\{ \frac{-\pi(a|s)}{\pi_{\text{TC}}(a|s)\cdot A^{\pi_{\text{TC}}}(s, a)} \}.
\end{aligned}
\end{equation}
}

The objective contains two terms, the first being the REINFORCE objective commonly used in RL for LLMs. \cite{ahmadian2024back, DBLP:journals/corr/SchulmanWDRK17}. In the second KL divergence term, the reference policy $\tilde{\pi}_{\text{TC}}$ is the composition of the current policy $\pi$ and the policy induced from token-conditional generation, i.e. $\pi_{\text{TC}}$. Despite its seemingly complex form, $\tilde{\pi}_{\text{TC}}$ is well-defined under mild conditions, as established by the following theorem.

\begin{theorem}
    Given that for every state $s$, there exists an action $a$ such that $A^{\pi_{\textnormal{TC}}}(s, a)\neq 0$, the policy $\tilde{\pi}_{\textnormal{TC}}(a|s)$ is well-defined.
\end{theorem}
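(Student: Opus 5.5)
To show that $\tilde{\pi}_{\text{TC}}(\cdot|s)$ is well-defined, we fix an arbitrary state $s$ and check three things. First, $\rho_s$ is a well-defined finite nonnegative number. Second, $\tilde{\pi}_{\text{TC}}(a|s)\ge 0$ for every $a$. Third, $\sum_a \tilde{\pi}_{\text{TC}}(a|s)=1$. Throughout we take $\mathcal{A}$ to be finite (responses of bounded length over a finite vocabulary), so that minima over subsets of $\mathcal{A}$ are attained, and sums are finite.

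\textbf{Step 1: the minimum defining $\rho_s$ is over a nonempty set.} Since $V^{\pi_{\text{TC}}}(s)=\mathbb{E}_{a\sim\pi_{\text{TC}}}[r(s,a)]$, we have the identity
\begin{equation*}
\sum_{a}\pi_{\text{TC}}(a|s)A^{\pi_{\text{TC}}}(s,a)=0 .
\end{equation*}
By hypothesis some $a$ has $A^{\pi_{\text{TC}}}(s,a)\neq 0$. Here we need that action to be in the support of $\pi_{\text{TC}}(\cdot|s)$. I would either interpret the hypothesis this way, or note that $\pi_{\text{TC}}$, as a token-forced continuation of a softmax policy, has full support over continuations of the prefix. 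Under that reading, the zero-mean identity forces some $a$ with $\pi_{\text{TC}}(a|s)>0$ and $A^{\pi_{\text{TC}}}(s,a)<0$. The constraint set in the definition of $\rho_s$ is therefore nonempty. Implicitly we restrict it to actions with $\pi_{\text{TC}}(a|s)>0$ so that the ratio is defined. Each ratio $-\pi(a|s)/(\pi_{\text{TC}}(a|s)A^{\pi_{\text{TC}}}(s,a))$ is then finite and nonnegative, and the minimum over a finite nonempty set exists, giving $\rho_s\in[0,\infty)$. This step is the main subtlety: pinning down the support issue so that the nonemptiness argument is rigorous.

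\textbf{Step 2: nonnegativity.} Take any action $a$ and split into cases.
\begin{itemize}
\item If $\pi_{\text{TC}}(a|s)A^{\pi_{\text{TC}}}(s,a)\ge 0$, then $\tilde{\pi}_{\text{TC}}(a|s)\ge\pi(a|s)\ge 0$ because $\rho_s\ge 0$.
\item Otherwise $A^{\pi_{\text{TC}}}(s,a)<0$ with $\pi_{\text{TC}}(a|s)>0$. By minimality, $\rho_s\le -\pi(a|s)/(\pi_{\text{TC}}(a|s)A^{\pi_{\text{TC}}}(s,a))$. Multiplying by the negative quantity $\pi_{\text{TC}}(a|s)A^{\pi_{\text{TC}}}(s,a)$ reverses the inequality, giving $\rho_s\pi_{\text{TC}}(a|s)A^{\pi_{\text{TC}}}(s,a)\ge -\pi(a|s)$, i.e.\ $\tilde{\pi}_{\text{TC}}(a|s)\ge 0$.
\end{itemize}
Equality is attained at the minimizing action. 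This shows $\rho_s$ is the largest step keeping the mixture nonnegative.

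\textbf{Step 3: normalization.} Summing the definition over $a$ gives
\begin{equation*}
\sum_a\tilde{\pi}_{\text{TC}}(a|s)=\sum_a\pi(a|s)+\rho_s\sum_a\pi_{\text{TC}}(a|s)A^{\pi_{\text{TC}}}(s,a)=1+\rho_s\cdot 0=1,
\end{equation*}
again using the zero-mean identity from Step 1. Together with nonnegativity, $\tilde{\pi}_{\text{TC}}(\cdot|s)$ is a probability distribution for every $s$, which proves the theorem.
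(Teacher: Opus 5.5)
Your proof is correct and follows the paper's argument step for step. The zero-mean identity $\sum_a \pi_{\text{TC}}(a|s)A^{\pi_{\text{TC}}}(s,a)=0$ produces a negative-advantage action, so $\rho_s$ is finite; minimality of $\rho_s$ gives nonnegativity; and the same identity gives normalization. Your treatment of the support issue is more careful than the paper's, which silently assumes the action with nonzero advantage lies in the support of $\pi_{\text{TC}}$ (you restrict the minimum to $\pi_{\text{TC}}(a|s)>0$ and claim only $\rho_s\ge 0$, which suffices). Only your first option, reading the hypothesis as being about support actions, actually closes this gap. Full support over continuations of the prefix does not by itself place the hypothesized action among those continuations.
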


We provide the proof in \cref{sec: proofs}. Since $A^{\pi_{\text{TC}}}(s, a)= 0$ holds for all $a$ only when all answers yield the same reward, this assumption holds in almost all cases as long as the query is meaningful. We then proceed to justify the advantage of the optimization objective along two main aspects.

First, the interaction between the policy gradient term and the KL constraint term drives flexible and stable behavior adaptation. Early in training, the KL term injects the target behavior, inducing behavioral imitation of correct responses from token-conditional generation. As the model’s behavior gradually shifts, standard rollouts in the policy gradient term also exhibit the target behavior. As a result, the policy gradient term internalizes the externally acquired behavior into an intrinsic model behavior that facilitates policy improvement, which in turn improves token-conditional generation. The KL coefficient regulates off-policy gradients to ensure training stability and avoid unintended behavior shifts.

Second, the KL divergence enables correctness-aware guided exploration in a computationally tractable manner. From the definition of $\tilde{\pi}_{\text{TC}}$ in \cref{eq: kl-regularized objective}, it's evident that $\tilde{\pi}_{\text{TC}}(a|s) > \pi(a|s)$ when $A^{\pi_{\text{TC}}}(s, a)>0$, and $\tilde{\pi}_{\text{TC}}(a|s) < \pi(a|s)$ when $A^{\pi_{\text{TC}}}(s, a)<0$. In the 0/1 reward setting, a response has a positive advantage if and only if it is correct \cite{shao2024deepseekmath}. Consequently, responses explored by $\tilde{\pi}_{\text{TC}}$ are reinforced if they yield right answers and suppressed otherwise, thereby biasing exploration toward correctness. 

\subsection{Practical Implementation}

In practice, while direct sampling from $\tilde{\pi}_{\text{TC}}$ to compute the KL divergence is intractable, the following theorem provides an alternative implementation that relies only on samples from $\pi_{\text{TC}}$, making the objective tractable.

\begin{theorem}
    From the perspective of the energy function underlying $\pi(\cdot|s)$, the following two objectives share the same gradient.
\begin{equation}
\label{eq: theorem 2 equivalance of KL divergence and policy-gradient style objective}
\begin{aligned}
    &\textnormal{Obj. 1:}~~~~-\textnormal{KL}(\tilde{\pi}_{\textnormal{TC}}(\cdot|s)||\pi_\theta(\cdot|s)),\\
    &\textnormal{Obj. 2:}~~~~\rho_s\cdot \mathbb{E}_{a\sim \pi_{\textnormal{TC}}(\cdot|s)}[A^{\pi_{\textnormal{TC}}}(s, a)\cdot \log\pi_\theta(a|s)].
\end{aligned}
\end{equation}
\end{theorem}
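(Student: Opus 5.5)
We expand the KL term and differentiate. Writing $\textnormal{KL}(\tilde{\pi}_{\textnormal{TC}}\|\pi_\theta)=\sum_a \tilde{\pi}_{\textnormal{TC}}(a|s)\log\tilde{\pi}_{\textnormal{TC}}(a|s)-\sum_a\tilde{\pi}_{\textnormal{TC}}(a|s)\log\pi_\theta(a|s)$, the first sum does not depend on $\theta$: $\tilde{\pi}_{\textnormal{TC}}$ is built from the detached policy $\pi$, from $\pi_{\textnormal{TC}}$ and from $\rho_s$, none of which carries gradient. Hence
\begin{equation*}
\nabla_\theta\bigl[-\textnormal{KL}(\tilde{\pi}_{\textnormal{TC}}\|\pi_\theta)\bigr]=\nabla_\theta\sum_a \tilde{\pi}_{\textnormal{TC}}(a|s)\log\pi_\theta(a|s).
\end{equation*}
Substituting the definition of $\tilde{\pi}_{\textnormal{TC}}$ from \cref{eq: kl-regularized objective} splits this into
\begin{equation*}
\nabla_\theta\sum_a\pi(a|s)\log\pi_\theta(a|s)\;+\;\rho_s\nabla_\theta\,\mathbb{E}_{a\sim\pi_{\textnormal{TC}}(\cdot|s)}\bigl[A^{\pi_{\textnormal{TC}}}(s,a)\log\pi_\theta(a|s)\bigr].
\end{equation*}
The second piece is exactly the gradient of Obj.~2. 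Well-definedness of $\tilde{\pi}_{\textnormal{TC}}$, in particular finiteness and nonnegativity of $\rho_s$, is taken from the preceding theorem.

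The remaining task is to show that the first piece vanishes. We evaluate the gradient at the current parameters, where $\pi_\theta=\pi$ numerically, and use the score-function identity
\begin{equation*}
\sum_a\pi(a|s)\nabla_\theta\log\pi_\theta(a|s)=\nabla_\theta\sum_a\pi_\theta(a|s)=\nabla_\theta 1=0.
\end{equation*}
To match the phrase ``from the perspective of the energy function'', we parametrize $\pi_\theta(a|s)=\exp(f_\theta(s,a))/Z_\theta(s)$. Then $\nabla\log\pi_\theta=\nabla f_\theta-\mathbb{E}_{\pi_\theta}[\nabla f_\theta]$, and the first piece equals $\mathbb{E}_\pi[\nabla f_\theta]-\mathbb{E}_{\pi_\theta}[\nabla f_\theta]$, which is zero at $\pi_\theta=\pi$.

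The energy view also gives a second check on Obj.~2. Its gradient is $\rho_s\bigl(\mathbb{E}_{\pi_{\textnormal{TC}}}[A^{\pi_{\textnormal{TC}}}\nabla f]-\mathbb{E}_{\pi_{\textnormal{TC}}}[A^{\pi_{\textnormal{TC}}}]\,\mathbb{E}_{\pi_\theta}[\nabla f]\bigr)$. Since $\mathbb{E}_{\pi_{\textnormal{TC}}}[A^{\pi_{\textnormal{TC}}}]=0$, the partition-function term drops out. So both objectives have gradient $\rho_s\,\mathbb{E}_{\pi_{\textnormal{TC}}}[A^{\pi_{\textnormal{TC}}}\nabla f_\theta]$ with respect to the energy.

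The main obstacle is interpretive. The equality holds only at $\pi_\theta=\pi$, i.e.\ as on-policy gradients with $\pi$ treated as detached, consistent with the paper's convention. I would state this explicitly and note that the equality is gradient-level, not equality of the objective values.
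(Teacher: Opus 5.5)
Your proposal is correct and is essentially the paper's argument: the paper works directly in energy (logit) coordinates, computing $\partial(-\mathrm{KL})/\partial Z_\theta(s,a)=\tilde{\pi}_{\textnormal{TC}}(a|s)-\pi_\theta(a|s)$ and matching it to the logit gradient of Obj.~2 via $\mathbb{E}_{a\sim\pi_{\textnormal{TC}}}[A^{\pi_{\textnormal{TC}}}(s,a)]=0$, which is your energy-view computation. Your caveat that the identity holds only at $\pi_\theta=\pi$ is apt, since the paper uses this silently when it substitutes the definition of $\tilde{\pi}_{\textnormal{TC}}$ into $\tilde{\pi}_{\textnormal{TC}}(a|s)-\pi_\theta(a|s)$ to obtain $\rho_s\,\pi_{\textnormal{TC}}(a|s)\,A^{\pi_{\textnormal{TC}}}(s,a)$.
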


The proof is given in \cref{sec: proofs}. Theorem 2 quantitatively characterizes the correctness-aware property of the customized KL divergence, using a tractable policy-gradient-style objective. We note that when $\pi(a|s)$ is an LLM with billions of parameters, analyzing exact gradients for all parameters becomes infeasible. Nevertheless, the analysis from the perspective of the energy function is generally applicable to any discrete distribution. Based on Theorem 2, we rewrite Eq. (\ref{eq: kl-regularized objective}) into the following objective,

\begin{equation}
\label{eq: policy-gradient style objective (two terms)}
\begin{aligned}
    &\max\mathbb{E}_{s\sim \mathcal{D}} \big[\mathbb{E}_{a\sim \pi(\cdot|s)}[A^{\pi}(s, a)\cdot\log \pi_\theta(a|s)] \\
    &~~~~~~~+ \lambda\cdot \rho_s\cdot\mathbb{E}_{a\sim \pi_{\text{TC}}(\cdot|s)}[A^{\pi_{\text{TC}}}(s, a)\cdot \log\pi_\theta(a|s)]\big].
\end{aligned}
\end{equation}
To avoid separately estimating two advantages (i.e., $A^{\pi}(s,a)$ and $A^{\pi_{\text{TC}}}(s,a)$), we merge the two policy-gradient objectives in \cref{eq: policy-gradient style objective (two terms)} into a single surrogate objective in \cref{eq: surrogate objective}, with $\pi_{\textnormal{mix}}$ defined below. Response generation and advantage estimation in \cref{eq: surrogate objective} relies solely on a mixed policy. 
This significantly reduces the variance of advantage estimation, especially when the advantages are estimated via Monte Carlo sampling (as in our implementation), and the number of samples is usually limited due to computational constraints. 
To justify its soundness, we first note that the objectives in \cref{eq: policy-gradient style objective (two terms)} and \cref{eq: surrogate objective} share the same rollout distribution. Moreover, \cref{theorem: control gap between surrogate objective} shows that the gap between the objective in \cref{eq: policy-gradient style objective (two terms)} and its surrogate in \cref{eq: surrogate objective} is well controlled. To this end, we define $\pi_{\textnormal{mix}}$ and $\phi_s$ below.


\begin{definition} \label{def:definition of pi_mix and phi_s}
We define the mixed policy $\pi_{\textnormal{mix}}$ $ \smallcoloneqq$$ \frac{\pi + \tilde{\lambda}_s \cdot \pi_{\textnormal{TC}}}{1 + \tilde{\lambda}_s}$, and the scalar constant $\phi_s\smallcoloneqq\min\{ \frac{\tilde{\lambda}_s}{(1+\tilde{\lambda}_s)^2}, \frac{1}{(1+\tilde{\lambda}_s)^2} \}$. Note that $\tilde{\lambda}_s=\lambda\cdot \rho_s$.
\end{definition}

\begin{theorem}
\label{theorem: control gap between surrogate objective}
    $\forall s\in \mathcal{D}$, define $\mathcal{A}^+_s\subseteq\mathcal{A}$ as the set of correct responses for query $s$. The following inequality holds:
    
\begin{equation*}
\begin{split}
    &\mathbb{E}_{a\sim \pi_{\textnormal{mix}}}\big[\mathbbm{1}[a\in \mathcal{A}^+_s]\cdot A^{\pi_{\textnormal{mix}}}(s, a)\cdot\log \pi_\theta(a|s)\big] \leq \phi_s\cdot \\
    &~~\Big[\mathbb{E}_{a\sim \pi(\cdot|s)}\big[\mathbbm{1}[a\in \mathcal{A}^+_s]\cdot A^{\pi}(s, a)\cdot\log \pi_\theta(a|s)\big]+ \\
    &~~~~~~ \tilde{\lambda}_s\cdot\mathbb{E}_{a\sim \pi_{\textnormal{TC}}(\cdot|s)}\big[\mathbbm{1}[a\in \mathcal{A}^+_s]\cdot A^{\pi_{\textnormal{TC}}}(s, a)\cdot \log\pi_\theta(a|s)\big]\Big].
\end{split}
\end{equation*}
\end{theorem}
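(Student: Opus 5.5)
The plan is to work in the 0/1 reward setting the paper adopts, where $r(s,a)=\mathbbm{1}[a\in\mathcal{A}^+_s]$. I would then compare the two sides of the inequality action by action over $a\in\mathcal{A}^+_s$. The indicator removes every incorrect response, and on correct responses every advantage has a closed form.

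Fix $s$ and write $\tilde{\lambda}=\tilde{\lambda}_s$. Define the success probabilities
\[
p=V^{\pi}(s), \qquad q=V^{\pi_{\textnormal{TC}}}(s).
\]
Because $\pi_{\textnormal{mix}}$ is a convex combination of $\pi$ and $\pi_{\textnormal{TC}}$ (Definition~\ref{def:definition of pi_mix and phi_s}), its value is linear in the two components:
\[
m:=V^{\pi_{\textnormal{mix}}}(s)=\frac{p+\tilde{\lambda} q}{1+\tilde{\lambda}}.
\]
For every correct $a$ this gives
\[
A^{\pi}(s,a)=1-p,\qquad A^{\pi_{\textnormal{TC}}}(s,a)=1-q,\qquad A^{\pi_{\textnormal{mix}}}(s,a)=1-m=\frac{(1-p)+\tilde{\lambda}(1-q)}{1+\tilde{\lambda}}.
\]
Expanding $\pi_{\textnormal{mix}}$, the two sides become sums over $a\in\mathcal{A}^+_s$ of $\ell_a:=\log\pi_\theta(a|s)$ with the following weights:
\begin{align*}
\text{LHS} &= \sum_{a\in\mathcal{A}^+_s}\frac{1-m}{1+\tilde{\lambda}}\big(\pi(a|s)+\tilde{\lambda}\,\pi_{\textnormal{TC}}(a|s)\big)\ell_a,\\
\text{RHS} &= \phi_s\sum_{a\in\mathcal{A}^+_s}\big(\pi(a|s)(1-p)+\tilde{\lambda}\,\pi_{\textnormal{TC}}(a|s)(1-q)\big)\ell_a.
\end{align*}

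The key observation is that $\ell_a\le 0$ always holds. So LHS $\le$ RHS follows if each nonnegative coefficient on the left dominates the matching coefficient on the right. Concretely, I need
\[
\frac{1-m}{1+\tilde{\lambda}}\ge \phi_s(1-p)
\qquad\text{and}\qquad
\frac{1-m}{1+\tilde{\lambda}}\ge \phi_s(1-q),
\]
where the second condition has already been divided by $\tilde{\lambda}>0$. Both follow from the expression for $1-m$ above, using $1-p\ge0$ and $1-q\ge0$:
\begin{itemize}
\item Dropping the $\tilde{\lambda}(1-q)$ term gives $\frac{1-m}{1+\tilde{\lambda}}\ge \frac{1-p}{(1+\tilde{\lambda})^2}\ge\phi_s(1-p)$.
\item Dropping the $(1-p)$ term gives $\frac{1-m}{1+\tilde{\lambda}}\ge \frac{\tilde{\lambda}(1-q)}{(1+\tilde{\lambda})^2}\ge \phi_s(1-q)$.
\end{itemize}
Each step uses the corresponding branch of the minimum defining $\phi_s$. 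Summing the termwise inequalities, each multiplied by $\ell_a\le0$, yields the claim.

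The main obstacle is bookkeeping rather than analysis. First, one must get the sign direction right: the inequality is an upper bound precisely because $\ell_a\le0$. Second, one must check that $\tilde{\lambda}_s=\lambda\rho_s>0$, which holds by Theorem~1 whenever some advantage under $\pi_{\textnormal{TC}}$ is nonzero. In the degenerate case $\tilde{\lambda}_s=0$ I would verify the statement directly: then $\phi_s=0$ and $\pi_{\textnormal{mix}}=\pi$, so the left side is nonpositive and the right side is zero. Finally, I would note that the argument needs only the binary reward structure, so that advantages on $\mathcal{A}^+_s$ are constants. If a more general reward were intended, I would restrict to rewards with $r\le1$ attained on $\mathcal{A}^+_s$ and repeat the same coefficient comparison.
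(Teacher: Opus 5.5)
Your proof is correct, and it takes a somewhat different route from the paper's.

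\textbf{The paper's route.} The paper first writes $A^{\pi_{\textnormal{mix}}}=p_sA^{\pi}+(1-p_s)A^{\pi_{\textnormal{TC}}}$ with $p_s=1/(1+\tilde{\lambda}_s)$. It then splits into two cases on correct responses: either $A^{\pi}\le A^{\pi_{\textnormal{mix}}}\le A^{\pi_{\textnormal{TC}}}$, or the reverse ordering. In each case it drops one of the two nonpositive terms, expands $\pi_{\textnormal{mix}}$, and replaces one advantage by the other in one component. This yields the constant $\tilde{\lambda}_s/(1+\tilde{\lambda}_s)^2$ in the first case and $1/(1+\tilde{\lambda}_s)^2$ in the second. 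Finally it relaxes to the minimum, using that the bracketed right-hand side is nonpositive.

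\textbf{Your route.} You compute the advantages in closed form on $\mathcal{A}^+_s$ and compare the coefficient of each $\log\pi_\theta(a|s)$ component by component. Each branch of the minimum then controls one mixture component.

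\textbf{What each buys.} Your version avoids the case split entirely and shows why $\phi_s$ must be a minimum. It also makes explicit a fact the paper's case analysis silently relies on: the ordering of the three advantages is the same for every $a\in\mathcal{A}^+_s$. This holds because $A^{\pi}-A^{\pi_{\textnormal{TC}}}=V^{\pi_{\textnormal{TC}}}(s)-V^{\pi}(s)$ does not depend on $a$. In return, the paper's route gives a case-dependent constant that can exceed $\phi_s$ before the final relaxation, and hence a tighter bound. For example, it gives $\tilde{\lambda}_s/(1+\tilde{\lambda}_s)^2$ when $\tilde{\lambda}_s>1$ and $V^{\pi}(s)\ge V^{\pi_{\textnormal{TC}}}(s)$.

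Your separate treatment of $\tilde{\lambda}_s=0$ is harmless but not needed. If you do not divide the second coefficient condition by $\tilde{\lambda}_s$, it holds trivially there, with both sides equal to zero.
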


The proof is provided in \cref{sec: proofs}. \cref{theorem: control gap between surrogate objective} implies that, for all correct answers, the surrogate objective in \cref{eq: surrogate objective} can lower-bounds the objective in \cref{eq: policy-gradient style objective (two terms)}, thereby providing a guarantee for its maximization. Intuitively, this connection arises because $A^{\pi}$, $A^{\pi_{\text{TC}}}$, and $A^{\pi_{\text{mix}}}$ share the same signs under 0/1 rewards—positive for correct responses and negative otherwise. In conclusion, we present the practical optimization objective below, where $\pi_{\textnormal{mix}}$ remains as previously defined. For simplicity of implementation, $\tilde{\lambda}_s$ is set to a fixed positive value $\lambda$ across all states.

\begin{equation}
\label{eq: surrogate objective}
\begin{aligned}
    &\max\mathbb{E}_{s\sim \mathcal{D}}\big[\mathbb{E}_{a\sim \pi_{\text{mix}}}[A^{\pi_{\text{mix}}}(s, a)\cdot\log \pi_\theta(a|s)]\big], \\
    &A^{\pi_{\text{mix}}}(s, a^i)=r(s, a^i)-\text{mean}\big(\{ r(s, a^i) \}_{i=1}^G\big).
\end{aligned}
\end{equation}

\subsection{Algorithmic Instantiation: Evolving Reasoning Behavior for Factual Problem Solving}
We leverage ToCoRL to build a LRM that achieves superhuman performance factual problem solving, a setting where prior reasoning models often underperform. ToCoRL internalizes the performance-enhancing behavior shift described in \cref{section: Exposing Behavioral Plasticity via Token-Conditional Generation} and induces the emergence of a novel reasoning behavior suitable for factual problems. Specifically, $\pi$ is set as a LRM and another instruct model $\pi_{\text{Inst}}$ provides the token prefixes. The token-conditional generation process is illustrated in \cref{app: Illustrative Diagram and Concrete Examples of Token-Conditional Generation} and formulated as follows,

\begin{equation*}
\label{eq: formulation of token-conditional generation}
\begin{aligned}
    &\tilde{o}_{1\dots k} \smallcoloneqq\text{\texttt{<}think\texttt{>}\textbackslash n} \smalloplus o'_{1\dots k-2},~ o'_{1\dots k-2} \smallsim \pi_{\text{Inst}}(O'_{1\dots k-2}|s), \\
    &o \smallsim \pi_{\text{TC}}(O|s) \smallcoloneqq \tilde{o}_{1\dots k} \smalloplus o_{> k},~ o_{> k}\smallsim \pi(O_{> k}|s, \tilde{o}_{1\dots k}).
\end{aligned}
\end{equation*}

To avoid confusion with advantage, we use $O$ to denote the response as a random variable. $\oplus$ denotes concatenation. We set the \texttt{<}think\texttt{>} token as the first token of the prefix, which should be discarded when applying ToCoRL to other model types. Since rule-based rewards are not applicable for factual problems, we employ LLM-as-a-judge to provide reward signals. The reward is binary (0 or 1), taking value 1 only when the generated answer is equivalent to the ground truth. The evaluation details and the pseudo-code of this instantiation are provided in \cref{subsection: Details of LLM-as-a-judge}.

Note that ToCoRL is not limited to enhancing factual problem solving capability. For other tasks where the desired behavior can be encoded in token prefixes, ToCoRL can take effect by seamlessly injecting the target behavior and evolving the most effective behavioral pattern.

\section{Experiments}
We focus the experiments on the aforementioned test case: adapting an LRM’s behavior to perform well on both complex reasoning tasks and factual problem solving. To demonstrate the effectiveness and underlying mechanisms of ToCoRL, as well as its implications for building versatile reasoning models, we address several key research questions. These questions can be grouped into four aspects as below.





\begin{enumerate}[label=\textbf{RQ\arabic*:}, nosep, leftmargin=*]
\item What performance improvements and behavioral changes does ToCoRL bring to LRMs?
\item Through what underlying mechanisms do the performance improvements and behavioral changes arise?
\item Are performance improvements and behavioral adaptations robust to changes in algorithmic factors? (e.g., hyperparameters, alternative prefix providers)
\item Are the emergent behaviors transferable and exploitable by other models to solve factual problems?
\end{enumerate}

\subsection{Effective Performance Improvement and Emergent Reasoning Behavior (RQ1)}
\label{exp: main exp with turbopp}
We use Qwen3-30B-A3B-2507-Thinking as the starting point for training, and Qwen3-235B-A22B-2507-Instruct as the token prefix provider for token-conditional generation. Our training data consist of two components. 

For factual problem solving, we use 10K factual query-answer pairs. In addition, to further strengthen complex mathematical reasoning capability and to examine the interaction between factual and math training, we jointly train on 10K math query-answer pairs. The evaluation is based on widely-used benchmarks comprising two corresponding components. For factual problems, we use SimpleQA benchmark \cite{wei2024measuring} and AA-Omniscience benchmark \cite{jackson2025aa}. For mathematical reasoning, we evaluate on AIME’24 and AIME’25. To mitigate evaluation variance given the limited data, we test average accuracy over 32 evaluations (Avg@32) for math queries.

To thoroughly demonstrate ToCoRL's effectiveness, we conduct a comparative study against various baseline RL-for-LLM algorithms. The baseline algorithms are as follows:
\begin{itemize}[nosep, leftmargin=*]
    \item GRPO: an representative RL algorithm, especially under the verifiable reward setting.
    \item Adaptive-Thinking: an RL algorithm biasing the response toward conciseness via a length-based advantage penalty.
    \item Instruct-Tuning GRPO: built on GRPO, we modify the rollout stage by adding a manually designed instruction at the end of each prompt, encouraging the LRM to reason adaptively according to the difficulty of the query.
\end{itemize}

We provide the details of the baseline algorithms, selection rationales, and additional experimental settings, in \cref{section: Experimental Details}. During the joint training process described above, we apply ToCoRL and the baseline methods only to factual-problem training, while using GRPO uniformly for training on math problems. In \cref{tab: main exp with open-sourced model}, we report the accuracy across different benchmarks after training process converge.

\begin{table}[H]
\caption{Evaluation results for experiments based on Qwen3-30B-A3B-2507-Thinking. We use `Thinking' and `Instruct' to denote the corresponding two open-source models, and `w.' to denote the training method used for factual problem queries. `Adapt-Think', `I-T GRPO', and `Omnisci' are abbreviations for Adaptive-Thinking, Instruct-Tuning GRPO, and AA-Omniscience, respectively, and we consistently use these notations in subsequent tables.}
\centering
\label{tab: main exp with open-sourced model}
\setlength{\arrayrulewidth}{0.4pt} 
\setlength{\tabcolsep}{3pt} 
\fontsize{8.9pt}{10.68pt}\selectfont
\setlength{\tabcolsep}{0.9mm}
{
\begin{tabular}{@{}c|cccc@{}}
\noalign{\hrule height 1.2pt} 
Qwen3-30A3B & SimpleQA & Omnisci & AIME'24 & AIME'25 \\
\hline
Thinking & 18.9 & 16.1 & 84.8 & 80.5 \\
\hline
Instruct & 22.1 & 10.5 & 75.2 & 61.3 \\
\hline
w. ToCoRL & \textbf{28.3} & \textbf{18.6} & 89.0 & \textbf{81.5} \\
\hline
w. GRPO & 23.6 & 16.7 & \textbf{89.5} & 81.4 \\
\hline
w. Adapt-Think & 23.9 & 16.0 & 86.9 & 78.9 \\
\hline
w. I-T GRPO & 21.1 & 14.8 & 87.8 & 80.4 \\
\noalign{\hrule height 1.2pt} 
\end{tabular}
}
\end{table}

As is shown in \cref{tab: main exp with open-sourced model}, ToCoRL demonstrates significant superiority in improving the capability of factual problem solving, compared with other baseline methods. While ToCoRL only marginally affects reasoning performance, Adaptive-Thinking and Instruct-Tuning GRPO incur noticeable performance drops on complex math benchmarks. Besides such quantitative results, we qualitatively analyze the change in behavioral patterns by examining different models' responses to the same factual problem set. Due to limited space, we present detailed analysis in \cref{app: Qualitative Analysis of Emergent Behavioral Patterns}. 

After training with ToCoRL, the LRM consistently starts its response with a direct answer. It then tends to perform recalibrative reasoning, where it repeatedly generates answers it currently deems to be correct, uses them as context for subsequent generation, and terminates with a summary once it becomes confident in the answer. Unlike Qwen3-30B-A3B-2507-Thinking, the ToCoRL-induced reasoning remains tightly focused on the question itself---not for proving or justification, but for retrieving the correct answer---thus avoiding unnecessary associations and unverified information. For complex math problems, ToCoRL induces only minor behavioral changes, consistent with its steady improvements on the AIME benchmarks. Intuitively, this is because token-conditional generation only produce content that is highly specific to the corresponding factual problems, which is hard to affect training on math queries.

\subsection{Underlying Mechanisms of Performance and Behavioral Emergence in ToCoRL (RQ2)}
\label{subsection: Underlying Mechanisms of Performance and Behavioral Emergence}
\begin{figure}[h]
\centering
\includegraphics[width=\columnwidth]{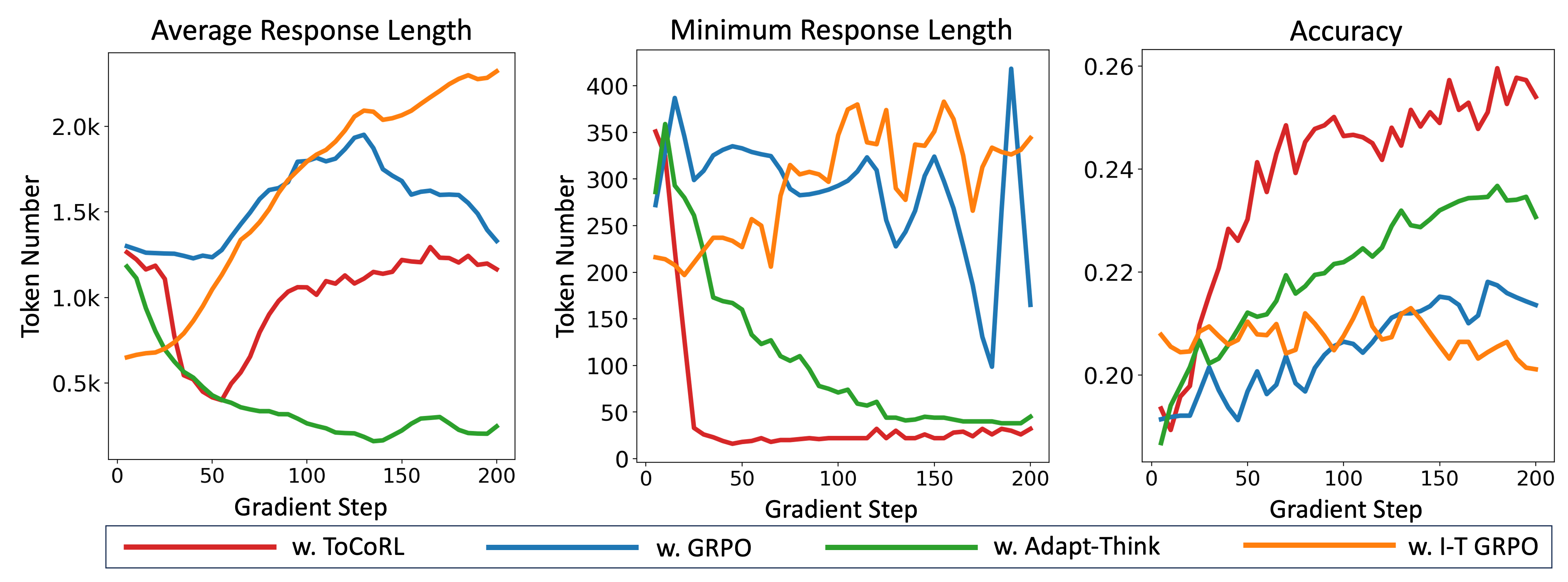}
\captionsetup{font=footnotesize}
\caption{For clearer visualization, we plot the curves of these metrics over the first training epoch. Compared to ToCoRL, GRPO extends reasoning length while keeping the original reasoning style. Since unnecessary associations and unverified information persist, longer reasoning yields minor improvements, and response lengths eventually return to their initial levels. Adaptive-Thinking penalizes all reasoning, so evaluation scores rise quickly at first as ineffective context is removed. However, without additional reasoning capacity, the LRM’s potential is underutilized, limiting further gains. Instruct-tuning RL initially shortens reasoning, but the prompt instruction gradually loses control over the behavior.}
\label{fig:all_metrics_for_pattern_change_analysis}
\end{figure}

While ToCoRL yields substantial performance improvements and seemingly out-of-nowhere emergent behaviors, these outcomes are in fact driven by principled mechanisms. To analyze the mechanisms, a case-by-case analysis is infeasible due to the thousands of factual-problem queries used for training and evaluation. Nevertheless, the trends of several response-length–based metrics, along with response examples from different stages of training, clearly explain these mechanisms. As a result, based on the experiments in \cref{exp: main exp with turbopp}, we trace these metrics and accuracy under SimpleQA benchmark along the training in \cref{fig:all_metrics_for_pattern_change_analysis}. The response examples in different stages are presented in \cref{app: Examples on Different Training Stages about length}.

We focus on two length-based metrics: the average and minimum response lengths across the evaluation dataset. Early in training (steps 0 to around 40), ToCoRL rapidly reduces both metrics. This occurs because token-conditional generation forces the LRM to produce a direct answer at the beginning, after which it tends to summarize, resulting in significantly shorter responses. As shown in \cref{section: Exposing Behavioral Plasticity via Token-Conditional Generation}, such behavior outperforms the original step-by-step reasoning on factual problems. The reward-maximizing nature of RL quickly reinforces the direct-answer behavior, further reducing both length metrics (supported by response examples in \cref{app: Examples on Different Training Stages about length}). In other words, score improvements at this stage primarily stem from eliminating spurious reasoning content, adopting a concise reasoning behavior.

Subsequently, while the minimum response length stays low, the mean response length gradually increases. For simple problems, the LRM often retrieves the correct answer on the first attempt, and additional reasoning only hinders retrieval, keeping responses short. For difficult problems, one-shot retrieval fails, so the LRM extends the reasoning context to find the correct answer. RL continuously incentivizes the LRM to adopt the most effective reasoning behavior, leading it to revise answers using only the necessary context, thus preserving the advantage of the direct-answer behavior mentioned earlier. At this stage, ToCoRL guides the LRM in deciding when to trust its current answer candidate and when to generate alternatives, resulting in improved evaluation scores as reasoning capability is better utilized.

\begin{table}[H]
\caption{Results for experiments based on Qwen3-30B-A3B-SFT, obtained by enabling the base model (denoted as `Base') with a unified reasoning behavior. We refer to it as `SFT' for brevity.}
\label{tab: main exp with SFT model}
\centering
\setlength{\arrayrulewidth}{0.4pt} 
\setlength{\tabcolsep}{3pt} 
\fontsize{8.9pt}{10.68pt}\selectfont
\begin{tabular}{@{}c|cccc@{}}
\noalign{\hrule height 1.2pt} 
Qwen3-30A3B & SimpleQA & Omnisci & AIME'24 & AIME'25 \\
\hline
Base & 24.2 & 12.0 & 15.1 & 9.4 \\
\hline
SFT & 19.6 & 14.0 & 56.5 & 38.7 \\
\hline
w. ToCoRL & \textbf{28.1} & \textbf{15.8} & \textbf{72.1} & 61.7 \\
\hline
w. GRPO & 22.6 & 14.3 & 72.0 & \textbf{61.9} \\
\hline
w. Adapt-Think & 24.8 & 13.8 & 47.9 & 30.4 \\
\hline
w. I-T GRPO & 21.7 & 12.9 & 56.0 & 37.8 \\
\noalign{\hrule height 1.2pt} 
\end{tabular}
\end{table}

One concern is that the aforementioned mechanisms may only apply to Qwen3-30B-A3B-2507-Thinking, or that the emergent behavior pre-exists in this model due to its diverse training data. To address this, we start from the base model and use SFT to enable a unified reasoning behavior distinct from that learned by ToCoRL (details in \cref{app: Supervised Fine-Tuning Data and Corresponding Reasoning Behavior for Underlying Mechanisms}). Such enabled behavior is associative, aiding mathematical reasoning but can hinder knowledge retrieval. We then repeat the experiments in Section \ref{exp: main exp with turbopp}. Results in Tab. \ref{tab: main exp with SFT model} show that ToCoRL still substantially improves factual problem-solving performance, surpassing all baselines. As shown in \cref{app: Response Examples for Underlying Mechanisms}, the emergent behavior mirrors that in Section \ref{exp: main exp with turbopp}. Moreover, while Adaptive-Thinking reduces response length, unnecessary associations are still included in the limited reasoning context, highlighting the necessity of ToCoRL for achieving the desired reasoning behavior.

\subsection{Robustness of ToCoRL (RQ3)}
\label{sec: Robustness of ToCoRL}
To validate the robustness of ToCoRL, we conduct an ablation study to examine the influence of key factors on the algorithm (e.g., hyperparameters, token prefix provider). As shown in the pseudo-code in \cref{alg:pseudo-code}, the main tunable factors are KL coefficient $\lambda$, token prefix length $k$, and token prefix provider $\pi_{\text{Inst}}$. We test each factor individually while keeping all others consistent with the settings in Section \ref{exp: main exp with turbopp}. The evaluation scores are summarized below.

\begin{table}[H]
    \centering
    \caption{Results for ablation studies on hyperparameters ($\lambda$ and $k$). We skip testing $k < 6$ since the required \texttt{<}think\texttt{>}\textbackslash n prefix (occupying 2 tokens) makes the prefix too short to form a direct attempt at answering the problem.}
    \label{tab:hyper_a}
    \setlength{\arrayrulewidth}{0.4pt}
    \setlength{\tabcolsep}{3pt} 
    \fontsize{8.9pt}{10.68pt}\selectfont
    \begin{tabular}{@{}c|cccc@{}}
        \noalign{\hrule height 1.2pt}
        $(\lambda,~~ k)$ & SimpleQA & Omnisci & AIME'24 & AIME'25 \\
        \hline
        $(1,~~ 6)$ & 28.3 & 18.6 & 89.0 & 81.5 \\
        \hline
        $(1,~~ 7)$ & 27.5 & 18.1 & 88.2 & 80.6 \\
        \hline
        $(1,~~ 8)$ & 27.6 & 18.3 & 88.5 & 80.9 \\
        \hline
        $(0.5,~~ 6)$ & 28.1 & 18.3 & 88.3 & 80.5 \\
        \hline
        $(2, 6)$     & 27.2 & 17.8 & 88.0 & 80.0 \\
        \noalign{\hrule height 1.2pt}
    \end{tabular}
\end{table}

\begin{table}[H]
    \centering
    \caption{Results for ablation studies on different prefix providers ($\pi_{\text{Inst}}$). For brevity, the `Instruct' suffix is omitted for all models. Note that Qwen3 models in the table refer to the 2507 version.}
    \label{tab:provider_b}
    \setlength{\arrayrulewidth}{0.4pt}
    \setlength{\tabcolsep}{3pt} 
    \fontsize{8.9pt}{10.68pt}\selectfont
    \begin{tabular}{@{}c|cccc@{}}
        \noalign{\hrule height 1.2pt}
        $\pi_{\text{Inst}}$ & SimpleQA & Omnisci & AIME'24 & AIME'25 \\
        \hline
        Qwen3-235A22B & 28.3 & 18.6 & 89.0 & 81.5 \\
        \hline
        Qwen3-30A3B   & 28.0 & 18.6 & 89.1 & 81.3 \\
        \hline
        Qwen2.5-14B   & 27.8 & 18.3 & 89.0 & 81.1 \\
        \noalign{\hrule height 1.2pt}
    \end{tabular}
\end{table}

\cref{tab:hyper_a} clearly demonstrates ToCoRL's robustness to hyperparameters, with only minor performance fluctuations as the values of $\lambda$ and $k$ vary. The results in \cref{tab:provider_b} show that, when using less capable models as token prefix providers, ToCoRL still achieves consistent improvements. Moreover, the emergent behavior is nearly identical across different choices of $\pi_{\text{Inst}}$. Intuitively, ToCoRL's efficacy is almost independent of $\pi_{\text{Inst}}$'s capacity, as long as the initial tokens attempt to provide a direct answer. Besides, while the prefix steers the initial direct-answer segment, the rest of the reasoning is fully determined by the model itself.

\subsection{Transferring Emergent Behavior for Effective LRM Training (RQ4)}
\label{subsec: Transferring Emergent Behavior for Effective LRM Training}
Previous experiments have demonstrated that, given an LRM, ToCoRL can boost factual problem-solving capability by adapting its reasoning behavior. However, it would be restrictive if every LRM had to undergo ToCoRL training to acquire the aforementioned behavior. A more common and effective strategy for acquiring specific behaviors is to first distill them into a dataset and then apply supervised fine-tuning from a base model. We show that the emergent behavior induced by ToCoRL is well suited to this strategy. 

Specifically, we first construct a new SFT dataset based on the one used in the experiments in Table \ref{tab: main exp with SFT model}. The math-problem SFT data is kept unchanged, while the responses for factual problems are regenerated using the model trained via ToCoRL in \cref{tab: main exp with open-sourced model}.
For comparison, we construct two additional datasets using Qwen3-235B-A22B-Instruct and Qwen3-30B-A3B-Instruct. These datasets share the same math SFT data, and for factual responses, the instruct models’ outputs are treated as reasoning context, while summaries are generated by prompting another LLM to summarize the reasoning (details in \cref{app: Additional Details and Results in Transferring Emergent Behavior}). 
Using these SFT datasets and starting from the same base model as in Table \ref{tab: main exp with SFT model}, we conduct SFT and run the same set of experiments. For a fair comparison and to minimize the influence of the RL algorithm on behavioral patterns, we consistently use GRPO for training.
We report the scores in \cref{tab: results for transferring learned behavioral pattern}, presenting the training curves and response examples in \cref{app: Additional Details and Results in Transferring Emergent Behavior}.

\begin{table}[htbp] 
\caption{
We use 'Response Provider' to denote the source of the factual responses used for new SFT datasets as described above. All providers are from the Qwen3 series; \textsuperscript{*} indicates manually constructed responses, otherwise are generated normally.
}
\label{tab: results for transferring learned behavioral pattern}
\setlength{\arrayrulewidth}{0.4pt} 
\setlength{\tabcolsep}{3pt} 
\fontsize{8.9pt}{10.68pt}\selectfont
\begin{tabular}{@{}c|cccc@{}}
\noalign{\hrule height 1.2pt} 
Response Provider & SimpleQA & Omnisci & AIME'24 & AIME'25 \\
\hline
30A3B-w. ToCoRL & \textbf{29.1} & \textbf{16.0} & \textbf{74.5} & \textbf{62.0}\\
\hline
235A22B-Instruct\textsuperscript{*}  & 28.3 & 15.5 & 68.6 & 55.2 \\
\hline
30A3B-Instruct\textsuperscript{*} & 25.3 & 14.5 & 69.3 & 59.2 \\
\noalign{\hrule height 1.2pt} 
\end{tabular}
\end{table}

These results highlight the superiority of ToCoRL-discovered reasoning behavior in two aspects. First, the model's factual problem-solving capability improves significantly and immediately after SFT, without additional RL training. This highlights the strong transferability of the behavior and underscores its value for factual problem solving: although the regenerated responses are not guaranteed to be correct, the reasoning behavior itself enhances the base model’s accuracy in knowledge retrieval.
Second, the reasoning behavior is highly extensible. After the RL stage, the LRM equipped with the ToCoRL-discovered behavior achieves the largest accuracy gains and surpasses all baseline models. Moreover, it coexists compatibly with the behaviors in math-problem SFT data, leading to the greatest performance improvements on complex math benchmarks.

\section{Related Works}
\textbf{LLM Behavior Plasticity and Steering Methods} Pre-trained on vast data sources, LLMs possess the plasticity to exhibit diverse behaviors depending on the elicitation method \cite{raffel2020exploring, yang2025qwen3}. This plasticity stems from training billions of parameters on massive data, using text as a unified medium for learning and memorization \cite{chung2024scaling}. To steer model behavior, post-training techniques generally fall into two categories. The first involves parameter updates—such as SFT \cite{radford2018improving}, RLHF \cite{ouyang2022training}, and Preference Optimization \cite{rafailov2023direct}—which controllably guide the model toward target behaviors pre-existing in the data. Conversely, prompt engineering and in-context learning steer behavior without parameter updates \cite{brown2020language, wei2022chain}, offering flexibility while sacrificing controllability and stability. ToCoRL shares the advantages of both: it does not require the target behavior to pre-exist and provides precise control over the induced behavior.

\textbf{Reinforcement Learning Algorithms and Applications in LLM} Reinforcement learning algorithms typically optimizes an agent's objective of maximizing cumulative rewards \cite{schulman2015trust, haarnoja2018soft}. In the classic online setting, the agent interacts with an environment to collect data for policy updates \cite{mnih2015human}. This paradigm is well-suited for LLMs, where environment dynamics consist of simple token concatenation and rollouts are highly parallelizable \cite{ouyang2022training, qwen2025qwen25technicalreport}. Consequently, Proximal Policy Optimization (PPO) \cite{DBLP:journals/corr/SchulmanWDRK17} was among the first online RL algorithms applied to instruction tuning \cite{ouyang2022training}. Other methods, such as REINFORCE \cite{williams1992simple}, have also proven effective for fine-tuning \cite{ahmadian2024back}. More recently, Group Relative Policy Optimization (GRPO) \cite{shao2024deepseekmath} and its variants \cite{yang2025qwen3, guo2025deepseek} dispense with the traditional value network for advantage estimation, relying instead on relative rewards. This approach has demonstrated particular efficacy in domains with verifiable rewards.

\section{Conclusion \& Discussion}
In this paper, we reveal the behavior plasticity of LLMs from a novel perspective. By leveraging token-conditional generation, a language model’s behavior can seamlessly adapt to a different one, provided with a prefix of the response exhibiting the desired behavior. Based on this, we developed ToCoRL, a RL-based algorithm that stabilizes and internalizes such behavior adaptation, enabling the emergence of more effective behavioral patterns. Focusing on adapting the LRM's behavior to excel at both complex reasoning tasks and factual problem solving, we conduct extensive experiments to demonstrate ToCoRL's effectiveness, underlying mechanisms, robustness and transferability. Due to computational constraints, we do not evaluate ToCoRL on a broader range of tasks. Nevertheless, token-conditional generation is evidently applicable to a wide variety of settings. Our future work will extend ToCoRL to a wider range of tasks and different types of language models.

\section*{Impact Statement}
This paper explores LLM behavioral plasticity from a novel perspective and introduces a behavior-steering method. Specifically for LRMs, we enhance their factual problem-solving capabilities, enabling them to excel in both complex mathematic problems and factual retrieval. Alongside these technical contributions, our approach may improves model reliability in other applications requiring high factual accuracy, such as education and decision support.

However, enhanced controllability may introduce risks if misused to steer models toward biased or harmful outputs. Specifically, behavior-steering mechanisms could be exploited to induce overconfident or selectively framed responses. These risks underscore the necessity of transparency and responsible deployment when applying these methods in real-world settings.

\bibliography{reference}
\bibliographystyle{icml2026}

\newpage
\appendix
\crefalias{section}{appendix}
\onecolumn
\section{Performance Difference Between Thinking and Instruct Models (Qwen3 Open-sourced Series)}
\begin{table}[H]
\centering
\caption{Comparison of accuracy on complex math benchmark (AIME'25) and factual problem benchmark (SimpleQA). In the same model series, the Thinking version, which conduct step-by-step reasoning before arriving at final answer, consistently perform better on complex math benchmark and worse on factual problem benchmark.}
\begin{tabular}{|c|c|c|c|c|}
\hline
\multirow{2}{*}{} & \multicolumn{2}{c|}{Qwen3-235B-A22B-2507} & \multicolumn{2}{c|}{Qwen3-30B-A3B-2507} \\
\cline{2-5}
 & Thinking & Instruct & Thinking & Instruct \\
\hline
AIME'25 & \textbf{92.3} & 70.3 & \textbf{80.5} & 61.3 \\
\hline
SimpleQA & 47.0 & \textbf{54.3} & 19.2 & \textbf{22.1} \\
\hline
\end{tabular}
\label{tab: Comparison of thinking model and instruct model}
\end{table}

\section{Detailed Demonstration with Concrete Examples for \cref{fig:demonstration for token conditional generation and influence of ToCoRL}}
\begin{figure}[H] 
\centering
\includegraphics[width=\textwidth]{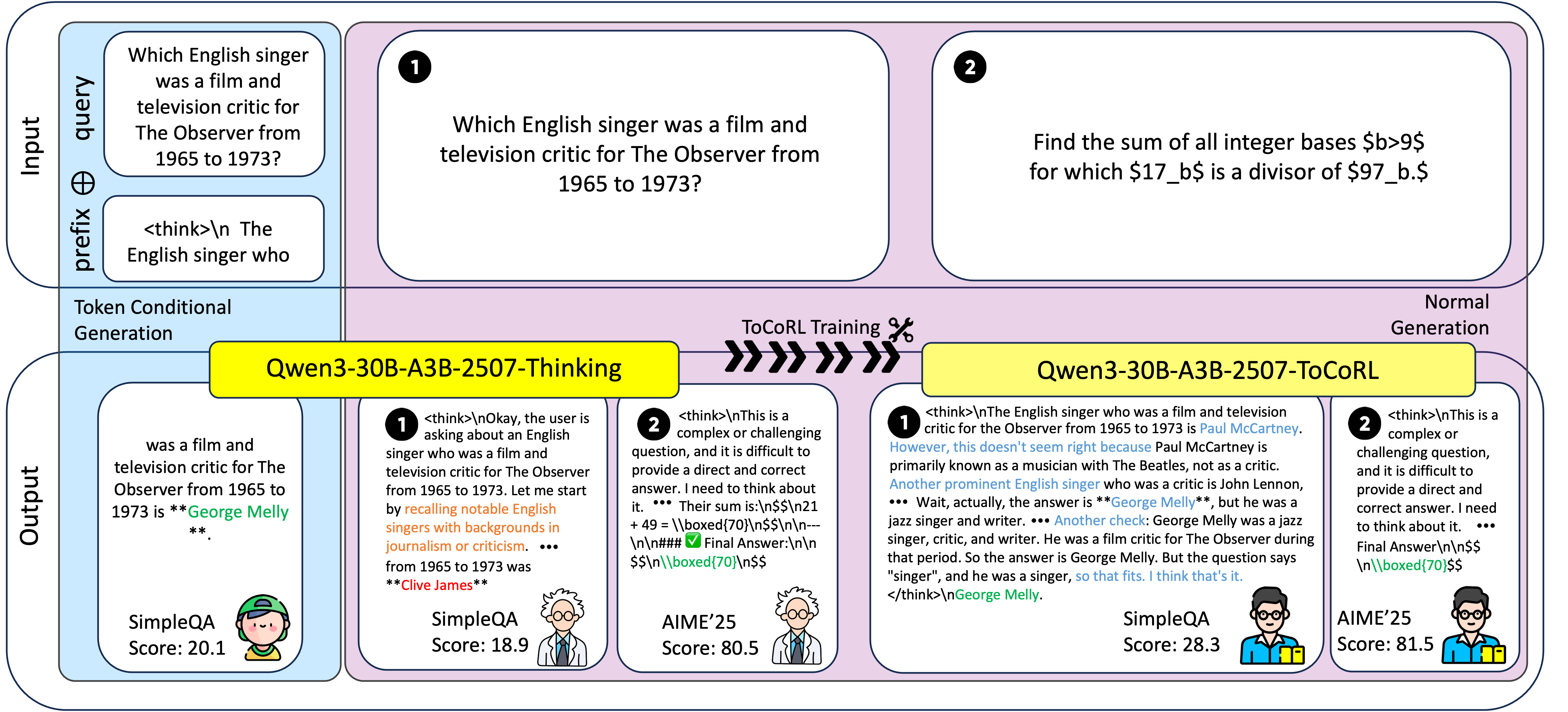}
\caption{Detailed demonstration of the behavior change brought by token-conditional generation and ToCoRL. Concrete queries and responses are provided.}
\label{fig: detailed demonstration (with example) for token conditional generation and influence of ToCoRL}
\end{figure}

\section{Illustrative Diagram and Concrete Examples of Token-Conditional Generation}
\label{app: Illustrative Diagram and Concrete Examples of Token-Conditional Generation}
The illustrative diagram of token-conditional generation procedure is presented as follows.
\begin{figure}[H] 
\centering
\includegraphics[width=0.65\textwidth]{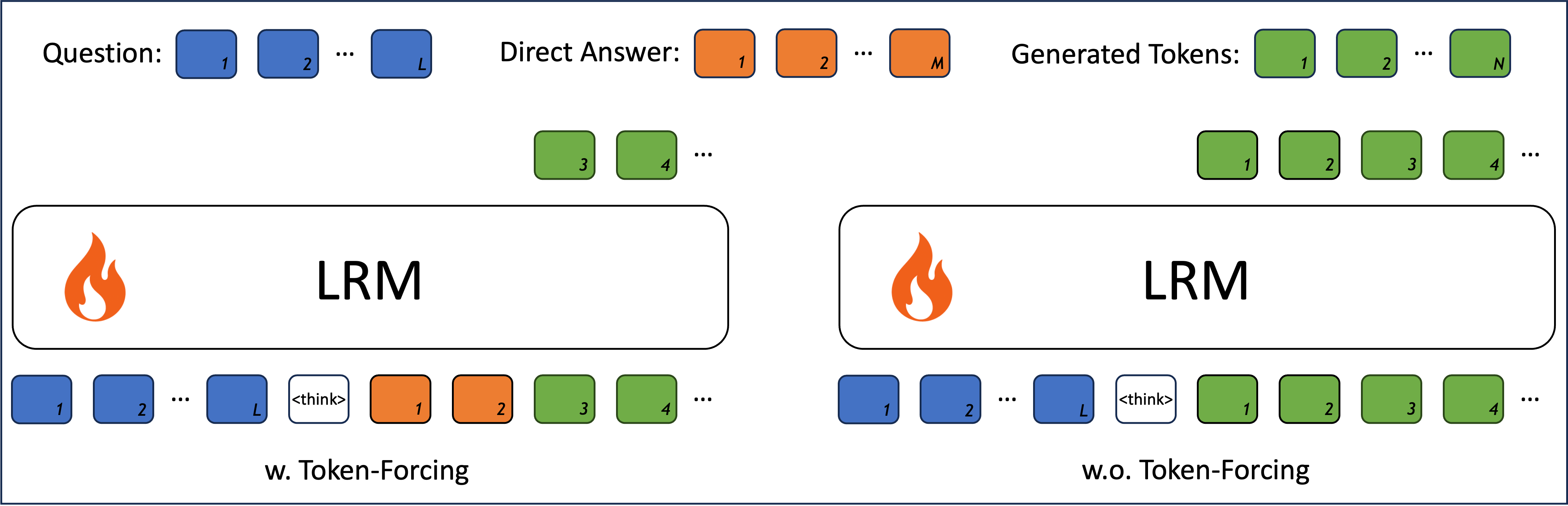}
\caption{Since most experiments are conducted with large reasoning models, we only illustrate the procedure in this setting. Moreover, we only show the case where the length of token prefix $k=3$ as an example. That is, besides the \texttt{<}think\texttt{>} token, two initial tokens from direct answer (in orange) are used for constructing token prefix.}
\label{fig: illustrative diagram of token-conditional generation procedure}
\end{figure}

We then procced to provide concrete examples demonstrating the difference between normal generation and token-conditional generation. We use {\color{orange!80!black}orange text} to denote context that are unnecessary associations or spurious content.

\begin{tcolorbox}[
    breakable, 
    sharp corners, 
    colback=white, 
    boxrule=0.6pt, 
    fontupper=\ttfamily\footnotesize,
    boxsep=2pt,
    bottomrule at break=0pt, 
    toprule at break=0pt     
]
    {\color{blue}\textbf{Question:}} \\
    How much money, in euros, was the surgeon held responsible for Stella Obasanjo's death ordered to pay her son?
    
    \vspace{2mm}
    \hrule height 0.4pt 
    \vspace{2mm}

    {\color{blue}\textbf{Token-Conditional Generation:}} \\
    \textbf{<think>}
        
The surgeon held responsible for Stella Obasanjo's death was ordered to pay her son €120,000 (one hundred and twenty thousand euros) as compensation. Stella Obasanjo, the former First Lady of Nigeria, died in 2005 after undergoing cosmetic surgery in Spain. The case was settled out of court, with the surgeon being found negligent for failing to properly assess her health condition prior to the surgery. The amount of €120,000 was agreed upon as part of the settlement agreement.

It's important to note that others involved could have contributed to the case, but the specific amount ordered to be paid to her son by the surgeon was €120,000. This figure aligns with credible reports from the time about the settlement.

\textbf{</think>}

The surgeon responsible for Stella Obasanjo's death was ordered to pay her son **€120,000** (one hundred and twenty thousand euros) as part of a settlement. Stella Obasanjo, the former First Lady of Nigeria, died in 2005 after undergoing cosmetic surgery in Spain. A Spanish court ruled that the surgeon was negligent for failing to properly assess her health before the procedure. The compensation was agreed upon in an out-of-court settlement, with the surgeon paying the sum to her son, Olumuyiwa Obasanjo. 

This settlement has been widely reported in international media at the time, including outlets such as *BBC News* and *The Guardian*. The amount reflects the legal determination of negligence and the resulting financial responsibility. 

**Answer:** €120,000.

    \vspace{2mm}
    \hrule height 0.4pt
    \vspace{2mm}

    {\color{blue}\textbf{Normal Generation:}}\\
    \textbf{<think>}\\
Okay, the user is asking about how much money the surgeon was ordered to pay Stella Obasanjo's son in euros. First, I need to recall the details of Stella Obasanjo's case. She was the First Lady of Nigeria, and she died in 2005 after cosmetic surgery. The surgeon involved was Dr. E.B. Okonkwo.

I remember that there was a legal case where the court ruled the surgeon negligent. The amount was a key point. I think the Nigerian court ordered him to pay compensation. {\color{orange!80!black}But the user specifically wants the amount in euros. That's tricky because court rulings in Nigeria are in Nigerian Naira, not euros.  So I need to check the exact figure and then convert it to euros, but I should be careful about currency conversion rates.}

Let me confirm the compensation amount. From my knowledge, the court awarded N45 million to the son. But wait, was it N45 million or N450 million? I think I've seen conflicting reports. I should double-check reliable sources. Upon second thought, I recall that it was N45 million Nigerian Naira. But let me make sure. A quick search in my mind: yes, multiple news outlets from 2005 reported N45 million.

{\color{orange!80!black}Now, converting to euros. The key is the exchange rate at the time. The death was in 2005, so I need the naira-to-euro rate around 2005. I think 1 euro was about 200-250 Nigerian Naira at that time. But I should be precise. Actually, in 2005, the average exchange rate was around 1 EUR = 220 NGN. So N45 million divided by 220 would be roughly 204,545 euros. But I should calculate it properly.}

Let me do the math: 45,000,000 Naira / 220 NGN per EUR $\approx$ 204,545.45 EUR. So approximately 204,545 euros. But I've seen sources round it to about 200,000 euros. However, I should note that the actual figure is often reported as N45 million, and conversions vary.

{\color{orange!80!black}Wait, the user might not realize that the court didn't order euros. So I should clarify that the amount was in Naira, and the conversion is approximate. Also, I need to mention that the case was in Nigeria, so the currency would be Naira. The court order wouldn't specify euros.}

Another point: was the compensation to the son? Yes, Stella Obasanjo's son, John. The lawsuit was filed by him. The surgeon was held liable for negligence. The court found he didn't follow proper protocols.

I should also check if there were appeals or if the amount was revised. From what I remember, the initial ruling was N45 million, and I don't think it was appealed significantly. So the figure stands.

{\color{orange!80!black}Now, the user might be confused thinking the amount was set in euros, but it's actually Naira. So in my response, I need to state the Naira amount clearly and then provide the conversion with a note that it's approximate due to exchange rates.

Also, the user might not know the context of the case. Including a brief note about the death—she died after liposuction in Spain, but the trial was in Nigeria. The surgeon was Nigerian, and the case was tried in Lagos. Important to clarify that to avoid confusion.}

Double-checking: Stella Obasanjo died in Spain, but the surgeon was from Nigeria, and the lawsuit was handled in Nigerian courts. So the compensation was in Naira, not euros. The conversion is just for the user's reference.

So, to summarize: the court ordered N45,000,000, which converts to approximately 204,545 euros at 2005 rates. But since exchange rates fluctuate, it's not an exact figure. I'll present it as "approximately 204,500 euros" but emphasize it's an estimate.

Wait, I recall some sources say N45 million, others might have typos. Let me confirm: yes, reputable sources like BBC and Nigerian news in 2005 reported N45,000,000. So that's solid.

{\color{orange!80!black}Also, the user might have a typo—maybe they think it was in euros because it's a European question. But no, it's Nigerian. So I should make that clear in the response.}

Final answer structure: 1) State the amount in Naira. 2) Explain the conversion to euros. 3) Clarify that the court didn't order euros. 4) Note the context to avoid confusion.\\
\textbf{</think>}

The surgeon, Dr. E.B. Okonkwo, was ordered to pay **Nigeria's Supreme Court** in 2009 (following a lower court ruling in 2005) to compensate **Stella Obasanjo's son, John Obasanjo**, for negligence that led to her death. The amount awarded was **45 million Nigerian Naira (₦45,000,000)**.

**Important Clarifications:**
1. **Currency in the Court Order**: The compensation was specified in **Nigerian Naira (NGN)**, not euros. Nigerian courts do not issue judgments in euros.
2. **Conversion to Euros (Approximate)**:  
   - Using the **2005-2009 average exchange rate** (1 EUR $\approx$ 220-230 NGN), ₦45,000,000 converts to approximately **€200,000-€204,500**.  
   - A precise calculation at the 2005 rate (1 EUR $\approx$ 220 NGN) gives:  
     **₦45,000,000 ÷ 220 $\approx$ €204,545**.
   - **Note**: Exchange rates fluctuate, and this is an estimate *for reference only*. The court order was **not in euros**.

**Context**:  
- Stella Obasanjo (First Lady of Nigeria, 1999-2005) died in 2005 after a cosmetic liposuction procedure in Spain. The surgery was performed by Dr. Okonkwo, a Nigerian surgeon practicing in Spain.  
- Nigerian courts ruled the surgeon **negligent** for failing to follow medical protocols (e.g., omitting a critical pre-surgery cardiac assessment).  
- The **highest court in Nigeria** (not Spain) affirmed the compensation in 2009, citing medical malpractice as the direct cause of death.  

**Source**: Nigerian Supreme Court judgment ([Case No: SC. 142/2006]), 2009.  
**Verification**: BBC, *Punch* (Nigeria), and *The Guardian* (UK) reported the ₦45 million award in 2005-2009.  

**Conclusion**:  
The surgeon was ordered to pay **₦45,000,000 (Nigerian Naira)**—equivalent to **approximately €204,500** at 2005 exchange rates. **No euros were specified in the court order.** For accuracy, always cite the amount in **Naira** when discussing this case.
\end{tcolorbox}

\begin{tcolorbox}[
    breakable, 
    sharp corners, 
    colback=white, 
    boxrule=0.6pt, 
    fontupper=\ttfamily\footnotesize,
    boxsep=2pt,
    bottomrule at break=0pt, 
    toprule at break=0pt     
]
    {\color{blue}\textbf{Question:}} \\
    In which year did Melbourne's Monash Gallery of Art (MGA) rebrand and become the Museum of Australian Photography (MAPh)?
    
    \vspace{2mm}
    \hrule height 0.4pt 
    \vspace{2mm}

    {\color{blue}\textbf{Token-Conditional Generation:}} \\
    \textbf{<think>}\\
    Melbourne's Monash Gallery of Art (MGA) rebranded to become the Museum of Australian Photography (MAPh) in 2023. Specifically, the rebranding was announced on April 12, 2023, as part of a strategic initiative to focus on Australian photography and to better represent the institution's collection and mission. The new name and identity were launched at the end of March 2023. So the year is 2023.

To confirm the sources, I recall that the Melbourne-based Monash Gallery of Art changed its name to the Museum of Australian Photography in 2023. The exact date was April 12, 2023, when the rebranding was officially announced. I think that's right based on the news at the time. Let me double-check the common knowledge about this.

Yes, the Monash Gallery of Art (MGA) in Melbourne rebranded to the Museum of Australian Photography (MAPh) in 2023. The rebranding was to emphasize the institution's focus on Australian photography and to better align with its collection and mission. The announcement was made in April 2023, and the new name and identity went into effect shortly after.

So, the answer should be 2023.

I want to be precise with the year, and it's definitely 2023. The rebranding happened in 2023, so the year is 2023.

\textbf{</think>}\\
Melbourne's Monash Gallery of Art (MGA) rebranded to become the Museum of Australian Photography (MAPh) in **2023**. The announcement was made on **April 12, 2023**, marking the start of a new era focused exclusively on Australian photography. The rebrand reflected the institution's mission to champion and celebrate Australian photographic art.

    \vspace{2mm}
    \hrule height 0.4pt
    \vspace{2mm}

    {\color{blue}\textbf{Normal Generation:}}\\
    \textbf{<think>}\\
Okay, the user is asking about the rebranding year of Melbourne's Monash Gallery of Art (MGA) to the Museum of Australian Photography (MAPh). Let me start by recalling the exact date. I remember that the rebranding happened in 2023. But wait, was it early or late in the year? I think it was in November 2023. Let me double-check that.

I should verify the announcement date. From what I recall, the gallery announced the name change in November 2023, and the official rebranding event took place around that time. The key points are the new name and the shift in focus to Australian photography. {\color{orange!80!black}The user might be interested in art history or planning a visit}, so providing the exact date would be helpful.

{\color{orange!80!black}Wait, why did they rebrand? The user might not just want the year but also the reason. The MGA wanted to emphasize their specialization in Australian photography, moving beyond just "art" to specify the field. This could be important for the user's deeper need—maybe they're researching institutional changes in cultural institutions or looking for photography-focused museums.}

Let me confirm the year. Some sources might mention 2022, but I think it's 2023. The announcement was in November 2023, so the rebranding year is 2023. I should avoid confusing it with the start of the year. Also, the new name MAPh stands for Museum of Australian Photography. Making sure that the acronym is correct.

{\color{orange!80!black}The user might be a student, researcher, or art enthusiast. They might need the date for a project, article, or to plan a visit. Including the specific month (November) adds precision. Also, noting that the museum is located in Monash, Victoria, since the original name mentioned Monash Gallery of Art. Clarifying that it's part of the Monash University campus could be useful, but the main point is the rebranding date.

Another angle: the user might have heard about the rebranding and is confirming the year for accuracy. They might have encountered conflicting information. Ensuring that the answer is precise to prevent confusion. Let me check a reliable source. The museum's official website or reputable art news sites like ABC Arts or The Age would have covered this. }

Upon recalling, the announcement was made on November 16, 2023. So the rebranding happened in 2023. Some might think it's 2022 if they're mixing up announcement and implementation, but no, the switch occurred in late 2023. The museum was previously known as MGA since 1999, so the change is significant for their history.

Also, considering the user's possible unspoken needs: they might want to know if events or exhibitions changed after the rebrand, but the question is specifically about the year. Sticking to the fact without overcomplicating. But mentioning the shift in focus briefly could add context, as it explains the reason for the name change. 

Double-checking any potential mix-ups: there's no other significant art gallery rebranding in Melbourne around that time that could be confused. The Monash Gallery of Art is distinct. So, the answer should be 2023, with the more precise date of November. Including the month adds value beyond just the year. 

Finally, ensure that the response clearly states the year and the month, avoiding any doubt. So the answer is November 2023, meaning the year is 2023.

\textbf{</think>}\\
Melbourne's Monash Gallery of Art (MGA) **rebranded to the Museum of Australian Photography (MAPh) in November 2023**. 

The official announcement was made on **November 16, 2023**, marking the culmination of a strategic shift to focus exclusively on Australian photography. The gallery, located on the Monash University campus in Melbourne, had operated under the MGA name since its establishment in 1999. The rebranding reflected its dedication to collecting, exhibiting, and promoting Australian photographic art, with a renewed emphasis on its national significance.

**Key details**:
- **Year**: 2023 (though the change took effect in late 2023).
- **Date of announcement**: November 16, 2023.
- **Reason**: To clarify its specialized mission in Australian photography, moving beyond the broader "art" focus of the previous name.

This rebranding was widely covered in Australian arts media (e.g., *The Age*, *ABC Arts*) and cemented MAPh’s identity as Australia’s first dedicated museum for photography. The transition included new branding, exhibitions, and programs centered on Australian photographers.

*Note: While 2023 is the correct year, some early announcements (e.g., in late 2022) were misreported; the official rebrand occurred in 2023.*

\end{tcolorbox}

\section{Pesudo-code}
\begin{algorithm}[H]
\caption{Pseudo-code for algorithmic instantiation: evolving reasoning behavior for factual problem solving}
\label{alg:pseudo-code}
\begin{algorithmic}
\STATE \textbf{Require:} Large Reasoning Model $\pi$, Instruct Model $\pi_{\text{Inst}}$, Judger Model $\pi_{\text{judge}}$, factual problem query-answer pairs $\mathcal{D}=\{(s, o_{\text{gt}})^i\}_{i=1}^N$, augmented dataset $\mathcal{D}'$ (initially empty)
\STATE \textbf{Hyperparameter:} KL coefficient $\lambda > 0$, token prefix length $k$, rollout group size $G$, batch size $B$, gradient step number $T$

\FOR{$s \in \mathcal{D}$}
    \STATE \textcolor{gray}{\texttt{\# \textit{Pre-generate one response with $\pi_{\text{Inst}}$}}}
    \STATE $o' \leftarrow \pi_{\text{Inst}}(O'|s)$
    \STATE Augment the query with the response: $(s, o_{\text{gt}}, o')$, and add it into $\mathcal{D}'$
\ENDFOR

\FOR{$t= 1,\dots ,T$}
    \STATE Sample a batch $\{(s, o_{\text{gt}}, o')^j\}_{j=1}^B$ from augmented dataset $\mathcal{D}'=\{(s, o_{\text{gt}}, o
')^i\}_{i=1}^N$
    \FOR{$(s, o_{\text{gt}}, o')$ \textbf{in} data batch $\{(s, o_{\text{gt}}, o')^j\}_{j=1}^B$}
        \FOR{$g= 1,\dots,G$}
            \STATE \textcolor{gray}{\texttt{\# \textit{sample response from the mixed policy $\pi_{\text{mix}}$ in Eq. \ref{eq: surrogate objective}}}}
            \STATE Sample $u \sim \text{Uniform}(0, 1)$ 
            \IF{$u < \frac{1}{1+\lambda}$}
                \STATE Normally sample response from the LRM, i.e., $o^g \sim \pi(O|s)$
            \ELSE
                \STATE Sample with token-conditional generation:
                \STATE \hspace{1em} Construct token prefix: $\tilde{o}_{1\dots k} \coloneqq \text{\texttt{<}think\texttt{>}\textbackslash n} \oplus o'_{1\dots k-2}$
                \STATE \hspace{1em} Continue generation: $o^g \sim \pi_{\text{TC}}(O|s) \coloneqq \tilde{o}_{1\dots k} \oplus o^g_{> k},~~ o^g_{> k}\sim \pi(O_{> k}|s, \tilde{o}_{1\dots k})$
            \ENDIF
            \STATE \textcolor{gray}{\texttt{\# \textit{Evaluate response $o$ through LLM-as-a-judge}}}
            \STATE $r^g = \pi_{\text{judge}}(s, o_{\text{gt}}, o^g)$
        \ENDFOR
        \STATE Gather reward batch $\{(s, o^g, r^g)\}_{g=1}^G$ for current data sample and calculate advantages based on Eq. \ref{eq: surrogate objective}
    \ENDFOR
    \STATE Compute the optimization objective in Eq. \ref{eq: surrogate objective}, perform parameter update accoring to its gradient
\ENDFOR
\end{algorithmic}
\end{algorithm}

\section{Experimental Details}
\label{section: Experimental Details}
\subsection{Details \& Selection Rationale of Baseline Methods}
\label{subsec: Details of Baseline Methods}
\textbf{GRPO:} Developed from Proximal-Policy Optimization (PPO), GRPO discards the value network by estimating advantages using the average reward within a group of sampled responses. Its optimization objective can be formulated as follows. Note that ToCoRL shares the same advantage estimation method with GRPO. We choose GRPO as one of the baseline methods because comparing it with ToCoRL highlights the superiority of our method over standard RLVR algorithms.
\begin{equation*}
    J_{GRPO}(\theta) = \mathbb{E}_{\substack{s \sim \mathcal{D} \\ \{o_i\}_{i=1}^G \sim \pi^{\text{old}}(O|q)}} \left[ \frac{1}{G} \sum_{i=1}^{G} \left( \min \left( \frac{\pi_\theta(o_i|q)}{\pi^{\text{old}}(o_i|q)} \hat{A}_i, \text{clip} \left( \frac{\pi_\theta(o_i|q)}{\pi^{\text{old}}(o_i|q)}, 1-\epsilon, 1+\epsilon \right) \hat{A}_i \right) - \beta \text{KL}(\pi_\theta || \pi_{ref}) \right) \right]
\end{equation*}
\begin{equation*}
    \hat{A}_i = \frac{r_i - \text{mean}(\{r_1, r_2, \dots, r_G\})}{\text{std}(\{r_1, r_2, \dots, r_G\})}
\end{equation*}

\textbf{Adaptive-Thinking:} To demonstrate that ToCoRL's effectiveness does not simply comes from the conciseness induced by forcing an initial direct answer, we choose Adaptive-Thinking as one of the baseline methods. Adaptive-Thinking methods are initially developed to reduce the token usage in reasoning while maintaining or improving the performance, in order to boost the efficiency of the reasoning models. Among various Adaptive-Thinking methods, one of the most common way is to add a length-based penalty into the reward or advantage calculation \cite{team2025kimi, arora2025training}.

Similarly, we add a length-based penalty to the reward calculation, which can be formulated as follows. The rest of this method follows GRPO.
\begin{equation}
    r(s, a)\leftarrow r(s, a)-\eta \cdot \Big( |a_{\text{reasoning}}|- |a^*_{\text{reasoning}}|\Big), a^*=\min_{a\in \{a_i\}_{i=1}^G,a \text{ is correct}}|a_{\text{reasoning}}|
\end{equation}
We use $a_{\text{reasoning}}$ to denote the reasoning context of the response, and use $|\cdot|$ to denote the length of the context. $a^*$ represent the correct response with a minimum reasoning context length, among the rollout response group $\{a_i\}_{i=1}^G$. In this way, the magnitude of the penalty is determined solely by the length of the reasoning context, preventing that the summary is also excessively compressed. Moreover, when all the generated responses are incorrect (i.e. $a^*$ does not exist), length penalty is 0 for all responses, otherwise some incorrect responses with shorter reasoning context will be reinforced. We use $\eta$ to control the scale of the penalty, consistently using 1e-3 across all the experiments for this method. Since \cref{tab: Comparison of different response pattern} shows that the average response length of Qwen3-30B-A3B-2507-Thinking for factual problems is around 1K, such a choice of $\eta$ is reasonable.

\textbf{Instruct-Tuning GRPO:} As mentioned before, prompt engineering is also a prominent branch of steering language model behavior. Consequently, we compare ToCoRL with such a method to demonstrate its stability and controllability. Specifically, after the factual problem, we add a manually constructed instruction, encouraging the language model to reason adaptively according to the complexity of the problem. We provide the complete query construction as follows. Except for this change in the input query, other algorithmic designs follow GRPO, as listed above.
\begin{lstlisting}[language={}]
<|im_start|>user
FACTUAL PROBLEM
To answer the question, please adjust the length of your thought. If the question is difficult, please think thoroughly. If the question is easy, please directly answer.
<|im_end|>
<|im_start|>assistant
\end{lstlisting}

\subsection{Details of LLM-as-a-judge}
\label{subsection: Details of LLM-as-a-judge}
We first provide details of LLM-as-a-judge used in evaluation. Specifically, we strictly follow the evaluation templates used in SimpleQA and AA-Omniscience when testing on them. Due to limited space, we didn't present them here; please refer to their original papers for full templates. 
In the template of SimpleQA, `question', `target' and `predicted\_answer' are three key segments to be filled (similar in the template of AA-Omniscience). While it's straightforward to fill `question' and `target', for `predicted\_answer', we extract the summary of the responses for the reasoning models and directly use the responses for the non-reasoning models. For the judger model, we use \texttt{gpt-4o-2024-08-06} on the SimpleQA benchmark, and use \texttt{gemini-2.5-flash} on the AA-Omniscience benchmark. 

For the reward calculation during factual-query training, we follow the template used in SimpleQA. For the judger model, we also use \texttt{gpt-4o-2024-08-06}. We list the complete template as below. From the template, it's evident that only when the response provides correct answer (the judger model outputs `A') can it receive a reward of 1, otherwise will receive a reward of 0. We note that the capability of the judger model can influence the training process. Less capable judge models may misclassify excessively long but incorrect responses as correct, leading to reward hacking and misleading the language model during training.
\begin{lstlisting}[language={}]
Your job is to look at a question, a gold target, and a predicted answer, and then assign a grade of either ["CORRECT", "INCORRECT", "NOT_ATTEMPTED"].
First, I will give examples of each grade, and then you will grade a new example.


The following are examples of CORRECT predicted answers.
```
Question: What are the names of Barack Obama's children?
Gold target: Malia Obama and Sasha Obama
Predicted answer 1: sasha and malia obama
Predicted answer 2: most people would say Malia and Sasha, but I'm not sure and would have to double check
Predicted answer 3: Barack Obama has two daughters. Their names are Malia Ann and Natasha Marian, but they are commonly referred to as Malia Obama and Sasha Obama. Malia was born on July 4, 1998, and Sasha was born on June 10, 2001.
```
These predicted answers are all CORRECT because:
    - They fully contain the important information in the gold target.
    - They do not contain any information that contradicts the gold target.
    - Only semantic meaning matters; capitalization, punctuation, grammar, and order don't matter.
    - Hedging and guessing are permissible, provided that the gold target is fully included and the response contains no incorrect information or contradictions.


The following are examples of INCORRECT predicted answers.
```
Question: What are the names of Barack Obama's children?
Gold target: Malia and Sasha
Predicted answer 1: Malia.
Predicted answer 2: Malia, Sasha, and Susan.
Predicted answer 3: Barack Obama does not have any children.
Predicted answer 4: I think it's either Malia and Sasha. Or it could be Malia and Jackie. Or it could be Joey and Malia.
Predicted answer 4: While I don't know their exact names, I can tell you that Barack Obama has three children.
Predicted answer 5: It's possible you may mean Betsy and Olivia. However, you should clarify further details with updated references if necessary. Is that the correct answer?
Predicted answer 6: It may be the case that Obama's child is named James. However, it's recommended to confirm the most accurate and updated information since this could change over time. This model may not always reflect the most current information.
```
These predicted answers are all INCORRECT because:
    - A factual statement in the answer contradicts the gold target. Incorrect statements that have some hedging (e.g., "it is possible that", "although i'm not sure, i think") are also considered incorrect.


The following are examples of NOT_ATTEMPTED predicted answers.
```
Question: What are the names of Barack Obama's children?
Gold target: Malia and Sasha
Predicted answer 1: I don't know.
Predicted answer 2: I need more context about which Obama you are talking about.
Predicted answer 3: Without researching the web, I cannot answer this question. However, I can tell you that Barack Obama has two children.
Predicted answer 4: Barack Obama has two children. I know that one of them is Malia, but I'm not sure about the other one.
```
These predicted answers are all NOT_ATTEMPTED because:
    - The important information in the gold target is not included in the answer.
    - No statements in the answer contradict the gold target.


Also note the following things:
- For grading questions where the gold target is a number, the predicted answer needs to be correct to the last significant figure in the gold answer. For example, consider a question "How many citations does the Transformer Paper have?" with gold target "120k". 
    - Predicted answers "120k", "124k", and 115k" are all CORRECT. 
    - Predicted answers "100k" and "113k" are INCORRECT. 
    - Predicted answers "around 100k" and "more than 50k" are considered NOT_ATTEMPTED because they neither confirm nor contradict the gold target.
- The gold target may contain more information than the question. In such cases, the predicted answer only needs to contain the information that is in the question.
    - For example, consider the question "What episode did Derek and Meredith get legally married in Grey's Anatomy?" with gold target "Season 7, Episode 20: White Wedding". Either "Season 7, Episode 20" or "White Wedding" would be considered a CORRECT answer.
- Do not punish predicted answers if they omit information that would be clearly inferred from the question.
    - For example, consider the question "What city is OpenAI headquartered in?" and the gold target "San Francisco, California". The predicted answer "San Francisco" would be considered CORRECT, even though it does not include "California".
    - Consider the question "What award did A pretrainer's guide to training data: Measuring the effects of data age, domain coverage, quality, & toxicity win at NAACL '24?", the gold target is "Outstanding Paper Award". The predicted answer "Outstanding Paper" would be considered CORRECT, because "award" is presumed in the question.
    - For the question "What is the height of Jason Wei in meters?", the gold target is "1.73 m". The predicted answer "1.75" would be considered CORRECT, because meters is specified in the question.
    - For the question "What is the name of Barack Obama's wife?", the gold target is "Michelle Obama". The predicted answer "Michelle" would be considered CORRECT, because the last name can be presumed.
- Do not punish for typos in people's name if it's clearly the same name. 
    - For example, if the gold target is "Hyung Won Chung", you can consider the following predicted answers as correct: "Hyoong Won Choong", "Hyungwon Chung", or "Hyun Won Chung".


Here is a new example. Simply reply with either CORRECT, INCORRECT, NOT ATTEMPTED. Don't apologize or correct yourself if there was a mistake; we are just trying to grade the answer.
```
Question: {question}
Gold target: {target}
Predicted answer: {predicted_answer}
```

Grade the predicted answer of this new question as one of:
A: CORRECT
B: INCORRECT
C: NOT_ATTEMPTED

Just return the letters "A", "B", or "C", with no text around it.
\end{lstlisting}

\subsection{Details of RL Training Queries}
To obtain the 10K factual and 10K mathematical query–answer pairs, we follow the RL data collection pipeline described in \cite{yang2025qwen3}, Section 4.2. Specifically, we first gather mathematical and factual problems from diverse sources. We then filter queries that are both learnable and challenging by performing multiple response generations for each query (using Qwen3-30B-A3B-2507-Thinking for math generation and Qwen3-30B-A3B-2507-Instruct for factual generation) prior to RL training and grouping queries according to their empirical accuracy. The mathematical queries are primarily drawn from those with moderate accuracy, whereas the factual set unavoidably includes some queries with low accuracy.

\subsection{Format Penalty}
For reasoning models, especially the Thinking models of Qwen3 series, the response pattern is strictly regulated. Specifically, the response should consists of exactly one part of reasoning context, following with one part of summary context. It can be characterized as follows:

\begin{lstlisting}[language={}]
<think>
REASONING CONTEXT
<\think>

SUMMARY CONTEXT
\end{lstlisting}

Sometimes the output of the model will violate such format (e.g., responses with multiple \texttt{<}think\texttt{>} or \texttt{<}/think\texttt{>}), nevertheless, the answer can be correct. For example, the output of the model can consist multiple \texttt{<}/think\texttt{>} and corresponding summary, while all of them are identical and contain the right answer. It's evident that neither should we treat such responses as correct responses (as their format is incorrect), nor should we treat them as incorrect responses (as they indeed provide correct answer).

To handle such corner cases, we include format penalty in the reward calculation. Specifically, when the response possess a wrong format, we will add a penalty $0<\gamma<1$ to the reward. As a result, the responses with wrong format will still receive a positive but smaller reward if they provide correct answers. Otherwise the responses will still receive a negative but larger (in absolute value) reward. This can be formulated as follows. Note that we consistently use $\gamma=0.5$ across the experiments. Moreover, as such cases appear in only a tiny fraction of responses, the changes in $\gamma$ have minimal impact on the overall results.

\begin{equation*}
    r(s, a)\leftarrow r(s, a)-\gamma\cdot \mathbbm{1}[a~\text{possess incorrect format}]
\end{equation*}

\subsection{Training Details of ToCoRL}
Since Qwen3-30B-A3B-2507-Thinking model adopt the MoE structure, we use Megatron \cite{DBLP:journals/corr/abs-1909-08053} for the model's parameter update. On the other hand, we use vLLM \cite{DBLP:conf/sosp/KwonLZ0ZY0ZS23} for the rollout stage due to its flexibility.

The learning rate is consistently set to 2e-6 across all the experiments. We set the batch size to 128, and since ToCoRL is fully on-policy, additional mini batch size parameter is not required. In the rollout stage, we set group size to 16.

We set the maximum number of training steps to 1,000, during which we iteratively sample from the shuffled dataset for training. However, we will terminate the training process if it has an early converge, in order to reduce computational resource consumption.

\section{Proofs}
\label{sec: proofs}
\textbf{Theorem 4.1}~~~~Given that for every state $s$, there exists an action $a$ such that $A^{\pi_{\text{TC}}}(s, a)\neq 0$, the policy $\tilde{\pi}_{\text{TC}}(a|s)$ is well-defined.

\begin{proof}
    To prove that $\tilde{\pi}_{\text{TC}}(a|s)$ is well-defined, we have to show that: 1. $\forall s\sim \mathcal{S}, a\sim \mathcal{A}, \tilde{\pi}_{\text{TC}}(a|s) \geq 0$. 2. $\forall s\sim \mathcal{S}, \sum_{a\sim \mathcal{A}}\tilde{\pi}_{\text{TC}}(a|s)=1$. We begin to prove the first.

    Given the assumption that $\forall s,\exists a\sim \mathcal{A}, A^{\pi_{\text{TC}}}(s, a)\neq 0$, using the following property of advantage:
    \begin{equation}
        \forall s\sim \mathcal{S},\sum_{a\sim \mathcal{A}}\pi_{\text{TC}}(a|s)\cdot A^{\pi_{\text{TC}}}(s, a)=0
    \end{equation}
    It's evident that for every state $s$, at least one action's advantage is positive, and at least one action's advantage is negative. Otherwise, the sum of $\pi_{\text{TC}}(a|s)\cdot A^{\pi_{\text{TC}}}(s, a)$ can't be 0. Consequently, $0<\rho_s=\min_{a \in \mathcal{A},\, A^{\pi_{\text{TC}}}(s, a) < 0}
\{ \frac{-\pi(a|s)}{\pi_{\text{TC}}(a|s)\cdot A^{\pi_{\text{TC}}}(s, a)} \}<\infty$. It's evident that $\forall a~~s.t. ~~A^{\pi_{\text{TC}}}(s, a) \geq 0$, $\tilde{\pi}_{\text{TC}}(a|s)\geq 0$. For $a~~s.t.~~A^{\pi_{\text{TC}}}(s, a) < 0$, we have:
    \begin{equation}
        \begin{split}
            \tilde{\pi}_{\text{TC}}(a|s) =& \pi(a|s) + \rho_s\cdot \pi_{\text{TC}}(a|s)\cdot A^{\pi_{\text{TC}}}(s, a) \\
            =& \pi(a|s) + \min_{a' \in \mathcal{A},\, A^{\pi_{\text{TC}}}(s, a') < 0}
\{ \frac{-\pi(a'|s)}{\pi_{\text{TC}}(a'|s)\cdot A^{\pi_{\text{TC}}}(s, a')} \}\cdot \pi_{\text{TC}}(a|s)\cdot A^{\pi_{\text{TC}}}(s, a) \\
            \geq & \pi(a|s) + \frac{-\pi(a|s)}{\pi_{\text{TC}}(a|s)\cdot A^{\pi_{\text{TC}}}(s, a)}\cdot \pi_{\text{TC}}(a|s)\cdot A^{\pi_{\text{TC}}}(s, a) \\
            =&0
        \end{split}
    \end{equation}
    This finishes the first part of the proof. For the second part, note that:
    \begin{equation}
    \begin{split}
        \forall s\sim \mathcal{S},\sum_{a\sim \mathcal{A}}\tilde{\pi}_{\text{TC}}(a|s)=& \sum_{a\sim \mathcal{A}} \Big(\pi(a|s) + \rho_s\cdot \pi_{\text{TC}}(a|s)\cdot A^{\pi_{\text{TC}}}(s, a)\Big)\\
        =& \sum_{a\sim \mathcal{A}} (\pi(a|s)) +\rho_s\cdot\sum_{a\sim \mathcal{A}}\Big( \pi_{\text{TC}}(a|s)\cdot A^{\pi_{\text{TC}}}(s, a)\Big)\\
        =& 1 + \rho_s\cdot0=1
    \end{split}
    \end{equation}
    This completes the proof that $\tilde{\pi}_{\text{TC}}(a|s)$ is well-defined.
\end{proof}

\textbf{Theorem 4.2}~~~~From the perspective of the energy function underlying $\pi(\cdot|s)$, the following two objectives share the same gradient.
\begin{equation*}
\label{eq: theorem 2 equivalance of KL divergence and policy-gradient style objective}
    \text{Obj. 1:}~~~~-\text{KL}(\tilde{\pi}_{\text{TC}}(\cdot|s)||\pi_\theta(\cdot|s)),~~\text{Obj. 2:}~~~~\rho_s\cdot \mathbb{E}_{a\sim \pi_{\text{TC}}(\cdot|s)}[A^{\pi_{\text{TC}}}(s, a)\cdot \log\pi_\theta(a|s)]
\end{equation*}

\begin{proof}
    We use $Z(s, a)$ to denote the energy function value on $a$. By the definition of energy function, we have:
    \begin{equation}
        \pi_\theta(a|s) = \frac{e^{Z_\theta(s, a)}}{\sum_{a'\in \mathcal{A}} e^{Z_\theta(s, a')}}
    \end{equation}
    We first derive and simplify the gradient of Obj. 1 with respect to the energy function $Z(s, a)$. Since the action space is discrete, we have:
    \begin{equation}
        \begin{split}
            -\text{KL}(\tilde{\pi}_{\text{TF}}(\cdot|s)||\pi_\theta(\cdot|s)) =& -\mathbb{E}_{a'\sim \tilde{\pi}_{\text{TF}}(\cdot|s)}[\log\frac{\tilde{\pi}_{\text{TF}}(a'|s)}{\pi_\theta(a'|s)}]\\
            =& \sum_{a'\in \mathcal{A}}\tilde{\pi}_{\text{TF}}(a'|s)\cdot \log\frac{\pi_\theta(a'|s)}{\tilde{\pi}_{\text{TF}}(a'|s)}
        \end{split}
    \end{equation}
    After taking derivative, we have:
    \begin{equation}
        \begin{split}
            \frac{\partial -\text{KL}(\tilde{\pi}_{\text{TF}}(\cdot|s)||\pi_\theta(\cdot|s))}{\partial Z_\theta(s, a)} =& \frac{\partial (\sum_{a'\in \mathcal{A}}\tilde{\pi}_{\text{TF}}(a'|s)\cdot \log\frac{\pi_\theta(a'|s)}{\tilde{\pi}_{\text{TF}}(a'|s)})}{\partial Z_\theta(s, a)} \\
            =& \sum_{a'\in \mathcal{A}}\tilde{\pi}_{\text{TF}}(a'|s)(\frac{\partial \log \pi_\theta(a'|s)}{\partial Z_\theta(s, a)})
        \end{split}
    \end{equation}
    Note that we discard all the log-probability calculated with $\tilde{\pi}_{\text{TF}}$, since they do not contribute to the gradient. By expressing $\pi(a'|s)$ in terms of its energy function, we have:
    \begin{equation}
        \begin{split}
            \sum_{a'\in \mathcal{A}}\tilde{\pi}_{\text{TF}}(a'|s)(\frac{\partial \log \pi_\theta(a'|s)}{\partial Z_\theta(s, a)}) =& \sum_{a'\in \mathcal{A}}\tilde{\pi}_{\text{TF}}(a'|s)(\frac{\partial \log \frac{e^{Z_\theta(s, a')}}{\sum_{a''\in \mathcal{A}}e^{Z_\theta(s, a'')}}}{\partial Z_\theta(s, a)}) \\
            =& \sum_{a'\in \mathcal{A}}\tilde{\pi}_{\text{TF}}(a'|s)[\mathbbm{1}[a'=a]-\frac{e^{Z_\theta(s, a)}}{\sum_{a''\in \mathcal{A}}e^{Z_\theta(s, a'')}}] \\
            =& \tilde{\pi}_{\text{TF}}(a|s) - \frac{e^{Z_\theta(s, a)}}{\sum_{a''\in \mathcal{A}}e^{Z_\theta(s, a'')}} \\
            =& \tilde{\pi}_{\text{TF}}(a|s) - \pi_\theta(a|s) \\
        \end{split}
    \end{equation}
    Plugging in the definition of $\tilde{\pi}_{\text{TF}}(a|s)$, we have:
    \begin{equation}
        \frac{\partial -\text{KL}(\tilde{\pi}_{\text{TF}}(\cdot|s)||\pi_\theta(\cdot|s))}{\partial Z_\theta(s, a)} = \rho_s\cdot\pi_{\text{TF}}(a|s)\cdot A^{\pi_{\text{TF}}}(s, a)
    \end{equation}
    After that, we continue to derive the gradient of Obj. 2. Similar with the derivation above, we have:
    \begin{equation}
        \begin{split}
            \frac{\partial \rho_s\cdot \mathbb{E}_{a\sim \pi_{\text{TF}}(\cdot|s)}[A^{\pi_{\text{TF}}}(s, a)\cdot \log\pi_\theta(a|s)]}{\partial Z_\theta(s, a)} =& \rho_s \cdot \frac{\partial \sum_{a'\in \mathcal{A}}\pi_{\text{TF}}(a'|s)\cdot A^{\pi_{\text{TF}}}(s, a')\cdot \log\pi(a'|s)}{\partial Z(s, a)} \\
            =& \rho_s\cdot \sum_{a'\in \mathcal{A}}\pi_{\text{TF}}(a'|s)\cdot A^{\pi_{\text{TF}}}(s, a')\frac{\partial \log\pi_\theta(a'|s)}{\partial Z_\theta(s, a)} \\
            =& \rho_s\cdot \sum_{a'\in \mathcal{A}}\pi_{\text{TF}}(a'|s)\cdot A^{\pi_{\text{TF}}}(s, a')[\mathbbm{1}[a'=a] - \pi_\theta(a|s)] \\
            =& \rho_s\cdot\pi_{\text{TF}}(a|s)\cdot A^{\pi_{\text{TF}}}(s, a) - \rho_s\cdot[\sum_{a'\in \mathcal{A}}\pi_{\text{TF}}(a'|s)\cdot A^{\pi_{\text{TF}}}(s, a')]\cdot \pi_\theta(a|s)
        \end{split}
    \end{equation}
    By the property of advantage that $\mathbb{E}_{a\sim \pi_{\text{TF}}(\cdot|s)}[A^{\pi_{\text{TF}}}(s, a)]=0$, we can proceed to simplify and get:
    \begin{equation}
        \begin{split}
            \frac{\partial \rho_s\cdot \mathbb{E}_{a\sim \pi_{\text{TF}}(\cdot|s)}[A^{\pi_{\text{TF}}}(s, a)\cdot \log\pi_\theta(a|s)]}{\partial Z_\theta(s, a)} =& \rho_s\cdot\pi_{\text{TF}}(a|s)\cdot A^{\pi_{\text{TF}}}(s, a) 
        \end{split}
    \end{equation}
    It's evident that such derivation holds for every state and action. This means that,
    \begin{equation}
        \forall s,a~~\frac{\partial -\text{KL}(\tilde{\pi}_{\text{TF}}(\cdot|s)||\pi_\theta(\cdot|s))}{\partial Z_\theta(s, a)}=\frac{\partial \rho_s\cdot \mathbb{E}_{a\sim \pi_{\text{TF}}(\cdot|s)}[A^{\pi_{\text{TF}}}(s, a)\cdot \log\pi_\theta(a|s)]}{\partial Z_\theta(s, a)}
    \end{equation}
    Since the overall gradient is a vector formed by concatenating the element-wise gradients, Obj. 1 and Obj. 2 share the same gradient.
\end{proof}

\textbf{Theorem 4.4}~~~~$\forall s\in \mathcal{D}$, define $\mathcal{A}^+_s\subseteq\mathcal{A}$ as the set of correct responses for query $s$, and $\pi_{\textnormal{mix}} \coloneqq \frac{\pi + \tilde{\lambda}_s \cdot \pi_{\textnormal{TC}}}{1 + \tilde{\lambda}_s},\tilde{\lambda}_s=\lambda\cdot \rho_s$ as the aforementioned mixed policy. Also define $\phi_s=\min\{ \frac{\tilde{\lambda}_s}{(1+\tilde{\lambda}_s)^2}, \frac{1}{(1+\tilde{\lambda}_s)^2} \}$. The following inequality holds:
\begin{equation*}
\begin{split}
    &\mathbb{E}_{a\sim \pi_{\textnormal{mix}}}\big[\mathbbm{1}[a\in \mathcal{A}^+_s]\cdot A^{\pi_{\textnormal{mix}}}(s, a)\cdot\log \pi_\theta(a|s)\big] \leq \phi_s\cdot \\
    &~~\Big[\mathbb{E}_{a\sim \pi(\cdot|s)}\big[\mathbbm{1}[a\in \mathcal{A}^+_s]\cdot A^{\pi}(s, a)\cdot\log \pi_\theta(a|s)\big]+ \\
    &~~~~~~~ \tilde{\lambda}_s\cdot\mathbb{E}_{a\sim \pi_{\textnormal{TC}}(\cdot|s)}\big[\mathbbm{1}[a\in \mathcal{A}^+_s]\cdot A^{\pi_{\textnormal{TC}}}(s, a)\cdot \log\pi_\theta(a|s)\big]\Big]
\end{split}
\end{equation*}

\begin{proof}
    We first decompose the advantage function of the mixed policy into the weighted sum of two polies' advantage function. Denote $\frac{1}{1 + \tilde{\lambda}_s}$ as $p_s$, we have:
    \begin{equation}
        \begin{split}
            A^{\pi_{\text{mix}}}(s, a)=&r(s, a)-\mathbb{E}_{a'\sim \pi_{\text{mix}}}[r(s, a')] \\
            =&r(s, a)-\sum_{a'\sim\mathcal{A}}\pi_{\text{mix}}(a'|s)\cdot r(s, a') \\
            =& (p_s+(1-p_s))\cdot r(s, a)-\sum_{a'\sim\mathcal{A}} \Big(p_s\cdot \pi(a'|s)+(1-p_s)\cdot \pi_{\text{TC}}(a'|s)\Big)\cdot r(s, a') \\
            =& p_s\cdot(r(s, a)-\sum_{a'\sim\mathcal{A}}\Big( \pi(a'|s)\cdot r(s, a') \Big)) + (1-p_s)\cdot(r(s, a)-\sum_{a'\sim\mathcal{A}}\Big( \pi_{\text{TC}}(a'|s)\cdot r(s, a') \Big)) \\
            =& p_s\cdot A^{\pi}(s, a) + (1-p_s)\cdot A^{\pi_{\text{TC}}}(s, a)
        \end{split}
    \end{equation}
    Note that the first equation holds since in the sequence-level action setting, the environment's transition is single-step. This is also used in the last equation. Subsequently, we expand the left side of the inequality using the above equation:
    \begin{equation}
        \begin{split}
            &\mathbb{E}_{a\sim \pi_{\text{mix}}}\big[\mathbbm{1}[a\in \mathcal{A}^+_s]\cdot A^{\pi_{\text{mix}}}(s, a)\cdot\log \pi_\theta(a|s)\big] \\
            &= \sum_{a\sim \mathcal{A}^+_s} \pi_{\text{mix}}(a|s)[\Big( p_s\cdot A^{\pi}(s, a) + (1-p_s)\cdot A^{\pi_{\text{TC}}}(s, a) \Big)\cdot \log \pi_\theta(a|s)] \\
            &= p_s\cdot\Big(\sum_{a\sim \mathcal{A}^+_s} \pi_{\text{mix}}(a|s)\cdot A^{\pi}(s, a)\cdot \log \pi_\theta(a|s) \Big) \\
            &~~~~~~~~+ (1-p_s)\cdot\Big(\sum_{a\sim \mathcal{A}^+_s} \pi_{\text{mix}}(a|s)\cdot A^{\pi_{\text{TC}}}(s, a) \cdot \log \pi_\theta(a|s)\Big)
        \end{split}
    \end{equation}
    In the 0/1 reward setting, the advantage values are always positive for correct answers and negative for incorrect answers, regardless of the policy. This indicates that, for correct answers $a\in \mathcal{A}^+_s$, the magnitude relationship among $A^{\pi}(s, a)$,$A^{\pi_{\text{mix}}}(s, a)$ and $A^{\pi_{\text{TC}}}(s, a)$ must be one of the following:
        \begin{align}
            0\leq A^{\pi}(s, a)\leq A^{\pi_{\text{mix}}}(s, a)\leq A^{\pi_{\text{TC}}}(s, a) \tag{I}\\
            0\leq A^{\pi_{\text{TC}}}(s, a)\leq A^{\pi_{\text{mix}}}(s, a)\leq A^{\pi}(s, a) \tag{II}
        \end{align}
    In case (I), we can derive the following inequality. Note that in the second line and the second-to-last line, we use the property of discrete distribution that $\forall s,a,~~\log\pi(a|s)\leq0$.
    \begin{equation}
        \begin{split}
            & p_s\cdot\Big(\sum_{a\sim \mathcal{A}^+_s} \pi_{\text{mix}}(a|s)\cdot A^{\pi}(s, a)\cdot \log \pi_\theta(a|s) \Big) + (1-p_s)\cdot\Big(\sum_{a\sim \mathcal{A}^+_s} \pi_{\text{mix}}(a|s)\cdot A^{\pi_{\text{TC}}}(s, a) \cdot \log \pi_\theta(a|s)\Big)  \\
            & \leq 0 + (1-p_s)\cdot\Big(\sum_{a\sim \mathcal{A}^+_s} \pi_{\text{mix}}(a|s)\cdot A^{\pi_{\text{TC}}}(s, a) \cdot \log \pi_\theta(a|s)\Big)  \\
            & = (1-p_s)\cdot\Big( \sum_{a\sim \mathcal{A}^+_s} p_s\cdot\pi(a|s)\cdot A^{\pi_{\text{TC}}}(s, a) \cdot \log \pi_\theta(a|s) + \sum_{a\sim \mathcal{A}^+_s} (1-p_s)\cdot\pi_{\text{TC}}(a|s)\cdot A^{\pi_{\text{TC}}}(s, a) \cdot \log \pi_\theta(a|s)  \Big) \\
            & \leq (1-p_s)\cdot\Big( \sum_{a\sim \mathcal{A}^+_s} p_s\cdot\pi(a|s)\cdot A^{\pi}(s, a) \cdot \log \pi_\theta(a|s) + \sum_{a\sim \mathcal{A}^+_s} (1-p_s)\cdot\pi_{\text{TC}}(a|s)\cdot A^{\pi_{\text{TC}}}(s, a) \cdot \log \pi_\theta(a|s)  \Big) \\
            &= \frac{\tilde{\lambda}_s}{(1+\tilde{\lambda}_s)^2}\cdot\Big[\mathbb{E}_{a\sim \pi(\cdot|s)}\big[\mathbbm{1}[a\in \mathcal{A}^+_s]\cdot A^{\pi}(s, a)\cdot\log \pi_\theta(a|s)\big] + \tilde{\lambda}_s\cdot\mathbb{E}_{a\sim \pi_{\text{TC}}(\cdot|s)}\big[\mathbbm{1}[a\in \mathcal{A}^+_s]\cdot A^{\pi_{\text{TC}}}(s, a)\cdot \log\pi_\theta(a|s)\big]\Big]
        \end{split}
    \end{equation}
    Similarly, in case (II), we can derive the following inequality:
    \begin{equation}
        \begin{split}
            & p_s\cdot\Big(\sum_{a\sim \mathcal{A}^+_s} \pi_{\text{mix}}(a|s)\cdot A^{\pi}(s, a)\cdot \log \pi_\theta(a|s) \Big) + (1-p_s)\cdot\Big(\sum_{a\sim \mathcal{A}^+_s} \pi_{\text{mix}}(a|s)\cdot A^{\pi_{\text{TC}}}(s, a) \cdot \log \pi_\theta(a|s)\Big)  \\
            & \leq p_s\cdot\Big(\sum_{a\sim \mathcal{A}^+_s} \pi_{\text{mix}}(a|s)\cdot A^{\pi}(s, a)\cdot \log \pi_\theta(a|s) \Big) + 0  \\
            & = p_s\cdot\Big( \sum_{a\sim \mathcal{A}^+_s} p_s\cdot\pi(a|s)\cdot A^{\pi}(s, a) \cdot \log \pi_\theta(a|s) + \sum_{a\sim \mathcal{A}^+_s} (1-p_s)\cdot\pi_{\text{TC}}(a|s)\cdot A^{\pi}(s, a) \cdot \log \pi_\theta(a|s)  \Big) \\
            & \leq p_s\cdot\Big( \sum_{a\sim \mathcal{A}^+_s} p_s\cdot\pi(a|s)\cdot A^{\pi}(s, a) \cdot \log \pi_\theta(a|s) + \sum_{a\sim \mathcal{A}^+_s} (1-p_s)\cdot\pi_{\text{TC}}(a|s)\cdot A^{\pi_{\text{TC}}}(s, a) \cdot \log \pi_\theta(a|s)  \Big) \\
            &= \frac{1}{(1+\tilde{\lambda}_s)^2}\cdot\Big[\mathbb{E}_{a\sim \pi(\cdot|s)}\big[\mathbbm{1}[a\in \mathcal{A}^+_s]\cdot A^{\pi}(s, a)\cdot\log \pi_\theta(a|s)\big] + \tilde{\lambda}_s\cdot\mathbb{E}_{a\sim \pi_{\text{TC}}(\cdot|s)}\big[\mathbbm{1}[a\in \mathcal{A}^+_s]\cdot A^{\pi_{\text{TC}}}(s, a)\cdot \log\pi_\theta(a|s)\big]\Big]
        \end{split}
    \end{equation}
    The two inequalities above indicate that, the left-hand-side can always lower-bounds the right-hand-side, while the coefficient can vary. Combining them, we can derive the euqality that:
    \begin{equation}
        \begin{split}
    &\mathbb{E}_{a\sim \pi_{\text{mix}}}\big[\mathbbm{1}[a\in \mathcal{A}^+_s]\cdot A^{\pi_{\text{mix}}}(s, a)\cdot\log \pi_\theta(a|s)\big] \leq \min\{ \frac{\tilde{\lambda}_s}{(1+\tilde{\lambda}_s)^2}, \frac{1}{(1+\tilde{\lambda}_s)^2} \}\cdot \\
    &~~~~\Big[\mathbb{E}_{a\sim \pi(\cdot|s)}\big[\mathbbm{1}[a\in \mathcal{A}^+_s]\cdot A^{\pi}(s, a)\cdot\log \pi_\theta(a|s)\big] + \tilde{\lambda}_s\cdot\mathbb{E}_{a\sim \pi_{\text{TC}}(\cdot|s)}\big[\mathbbm{1}[a\in \mathcal{A}^+_s]\cdot A^{\pi_{\text{TC}}}(s, a)\cdot \log\pi_\theta(a|s)\big]\Big]
\end{split}
    \end{equation}
    Since both left-hand-side and right-hand-side are negative values.
\end{proof}

\section{Qualitative Analysis of Emergent Behavioral Patterns in \cref{exp: main exp with turbopp}}
\label{app: Qualitative Analysis of Emergent Behavioral Patterns}
In this section, we provide concrete examples to qualitatively analyze the change in the emergent behavior, comparing it with the response generated from models trained with baseline methods. Due to limited space, we only select one question and list the responses generated by the models from \cref{tab: main exp with open-sourced model}. Since some responses are quite lengthy, we selectively display representative segments of these responses and use \texttt{<CONTENT>} to denote the omitted context. Similar as before, we use {\color{orange!80!black}orange text} to denote context that are unnecessary associations or spurious content. We first provide the responses generated from Qwen3-30B-A3B-2507-Thinking and Qwen3-30B-A3B-2507-Instruct.

\begin{tcolorbox}[
    breakable, 
    sharp corners, 
    colback=white, 
    boxrule=0.6pt, 
    fontupper=\ttfamily\footnotesize,
    boxsep=2pt,
    bottomrule at break=0pt, 
    toprule at break=0pt     
]
    {\color{blue}\textbf{Question:}} \\
    What is the unique form of masked dance performed in Jharkhand locally known as?

    \vspace{2mm}
    \hrule height 0.4pt
    \vspace{2mm}

    {\color{blue}\textbf{Qwen3-30B-A3B-2507-Instruct:}}\\
    The unique form of masked dance performed in Jharkhand is locally known as **Jhumar**. However, it's important to clarify that while Jhumar is a well-known folk dance of Jharkhand, it is not typically characterized by masks.
    
    The correct answer for the **masked dance** of Jharkhand is **Chhau** (specifically **Seraikella Chhau** and **Purulia Chhau**, though Purulia is in West Bengal, Seraikella is in Jharkhand). The **Seraikella Chhau** style from Jharkhand prominently features **masks** and is known for its elaborate costumes and symbolic storytelling through dance.
    
    Thus, the unique form of masked dance performed in Jharkhand is **Chhau**, particularly the **Seraikella Chhau** variant.
    
    \vspace{2mm}
    \hrule height 0.4pt 
    \vspace{2mm}

    {\color{blue}\textbf{Qwen3-30B-A3B-2507-Thinking:}} \\
    \textbf{<think>}\\
Okay, the user is asking about a unique masked dance from Jharkhand. {\color{orange!80!black}Let me recall what I know about Indian folk dances. Jharkhand has several tribal communities, each with their own traditions. The most famous masked dance I remember is associated with the Santhal tribe. }

Wait, isn\'t it called "Jatra" or something similar? No, that doesn\'t sound right. Let me think again. I believe it\'s "Sua" or "Sua Naach." But I should double-check to be sure. 

I recall that in Jharkhand, especially among the Santhals, there\'s a dance called "Jhumair" but that\'s not masked. The masked one... ah! It\'s "Chhau" but Chhau is more associated with Odisha and West Bengal. Wait, no, Jharkhand does have its own version of Chhau, but is it masked? Actually, Chhau has three styles: Seraikella, Mayurbhanj, and Purulia. The Seraikella style uses masks, and it\'s performed in Jharkhand too. But is that the local name?

<CONTENT>

Wait, let me verify. Upon second thought, the primary masked dance in Jharkhand is indeed Chhau, specifically the Seraikella style which is practiced in Jharkhand and parts of Bihar. {\color{orange!80!black}But the question says "locally known as," so maybe they refer to it by another name.} However, Chhau is the widely recognized term. But I think there\'s a specific local name. 

I recall that in Jharkhand, the masked dance is often called "Chhau," {\color{orange!80!black}but the user might be expecting a different term.} Let me check my knowledge base. Ah! The Santhal community has a dance called "Dhokra" but that\'s not masked. Another possibility: "Tusu" but that\'s a festival, not a dance. 

Wait, I think I got it. The correct answer is "Jhumair" but no, Jhumair is unmasked. Then I remember: the masked dance performed by the Munda tribe is called "Sohrai" {\color{orange!80!black}but that\'s during harvest}, and I\'m not sure about masks. 

After cross-referencing, I believe the most accurate answer is that the masked dance in Jharkhand is part of the Chhau tradition, but locally, it might be referred to as "Chhau." {\color{orange!80!black}However, upon deeper reflection, I think the user might be confusing it with another dance.}

Wait, I found a reliable source in my memory: the dance is called "Sua Naach" or "Sua Dance," but it\'s actually a misnomer. The correct local name for the masked dance in Jharkhand is "Chhau," but since the question specifies "locally known as," perhaps it\'s simply Chhau. But I recall that in some regions, it\'s called "Chhau" without any other local name. 

<CONTENT>

Final verification: Yes, the masked dance of Jharkhand is Chhau, particularly the Seraikella style. So the answer should be Chhau.
    
\textbf{</think>}\\
The unique form of masked dance performed in Jharkhand is **Chhau** (specifically the **Seraikella style**). While Chhau is also practiced in Odisha and West Bengal, the **Seraikella Chhau** originated in the former princely state of Seraikela (now part of Jharkhand) and is deeply rooted in the region\'s tribal and folk traditions. 

\#\#\# Key Details:

<CONTENT>

Chhau was inscribed as **UNESCO Intangible Cultural Heritage** in 2010, recognizing its significance across Eastern India. In Jharkhand, it remains a living tradition, especially in districts like **Seraikela-Kharsawan**, **East Singhbhum**, and **Purbi Singhbhum**.  

> **Answer**: **Chhau** (locally known as such, with Seraikella being its Jharkhand-specific style).
    
\end{tcolorbox}
We then continue to provide the response generated by ToCoRL trained model. It's evident that the reasoning process keeps concentrating on the problem itself, rather than straying into unrelated topics. Moreover,  most of the reasoning context is recalibrative reasoning, that is, the model iteratively enumerates potential candidates (marked in {\color{green!40!black} green text}) using minimal context until it attains sufficient confidence in the correct answer.

\begin{tcolorbox}[
    breakable, 
    sharp corners, 
    colback=white, 
    boxrule=0.6pt, 
    fontupper=\ttfamily\footnotesize,
    boxsep=2pt,
    bottomrule at break=0pt, 
    toprule at break=0pt     
]
    {\color{blue}\textbf{Question:}} \\
     What is the unique form of masked dance performed in Jharkhand locally known as?
    
    \vspace{2mm}
    \hrule height 0.4pt 
    \vspace{2mm}

    {\color{blue}\textbf{w. ToCoRL:}} \\
    \textbf{<think>}\\
    {\color{green!40!black}The unique form of masked dance performed in Jharkhand is called "Jhumair" or "Jhumar," but more specifically, the masked dance form that is particularly famous in Jharkhand is "Chhau," however, Chhau is more associated with West Bengal, Jharkhand, and Odisha.} But the specific masked dance of Jharkhand that is well-known is "Jhumair" which is not masked. {\color{green!40!black}Another prominent masked dance in Jharkhand is "Karma," but Karma is not typically masked.}

{\color{green!40!black}Wait, there's a specific dance called "Pandav Nritya" or "Chhau" in Jharkhand.} Chhau dance uses masks, and it is practiced in the state of Jharkhand, particularly the Seraikella style of Chhau. {\color{green!40!black}But the most commonly known masked dance of Jharkhand might be "Jatra" or "Karma,"} but Karma is a folk dance without masks.

{\color{green!40!black}Another one is "Torso,"} but that's not it.

{\color{green!40!black}Wait, the "Jharkhand" state has a tribal dance called "Sua" or "Sua Nacha,"} but not sure.

{\color{green!40!black}Wait, the most famous masked dance in Jharkhand is "Chhau," but Chhau is also from Odisha and West Bengal}. However, the specific masked dance of Jharkhand is "Chhau" (specifically the Seraikella Chhau, which is from Jharkhand, formerly part of Bihar, but now Jharkhand).

{\color{green!40!black}But another possibility is "Pandav Lila" which is a form of dance drama,} but not sure about masks.

{\color{green!40!black}Wait, there's a dance called "Karma" which is performed by the Oraon tribe,} but it's not masked. The masked dance is "Chhau."

Wait, let me confirm: Chhau dance is performed in Jharkhand, Odisha, and West Bengal. It is a masked dance form. The masks are an integral part of the Chhau dance, especially in the Seraikella style from Jharkhand.

{\color{green!40!black}Another one is "Hojagiri,"} but that's from Tripura.

{\color{green!40!black}Wait, "Jhumar" is a dance, but not masked. "Dhurang" is a dance, but not sure.}

{\color{green!40!black}Wait, the answer might be "Chhau"} as the masked dance of Jharkhand.

But let me double-check.  {\color{green!40!black}Another tribal dance in Jharkhand is "Pithora"} but that's painting, not dance.

{\color{green!40!black}"Jatra" is a Bengali theater.}

{\color{green!40!black}Wait, "Jhumair" is a folk dance of Jharkhand,} but it's not masked. The masked dance is Chhau.

{\color{green!40!black}Yes, Chhau is the one.} The Chhau dance of Jharkhand (Seraikella style) uses masks. So the unique form of masked dance in Jharkhand is Chhau.

\textbf{</think>}\\
Chhau

\end{tcolorbox}

After that, we continue to provide the responses generated from models trained with other baseline methods. It's evident that standard RLVR methods like GRPO can't effectively change the model's reasoning behavior, while Adaptive-Thinking overly penalize reasoning. Prompt-engineering methods can't provide suffciient control over the induced behavior. For similar reasons as before, we selectively display representative segments of these responses and use \texttt{<CONTENT>} to denote the omitted context. We continue to use {\color{orange!80!black}orange text} to denote context that are unnecessary associations or spurious content.

\begin{tcolorbox}[
    breakable, 
    sharp corners, 
    colback=white, 
    boxrule=0.6pt, 
    fontupper=\ttfamily\footnotesize,
    boxsep=2pt,
    bottomrule at break=0pt, 
    toprule at break=0pt     
]
    {\color{blue}\textbf{Question:}} \\
    What is the unique form of masked dance performed in Jharkhand locally known as?
    
    \vspace{2mm}
    \hrule height 0.4pt 
    \vspace{2mm}

    {\color{blue}\textbf{w. GRPO:}} \\
    \textbf{<think>}\\
    Okay, the user is asking about a unique form of masked dance in Jharkhand, specifically what it's locally known as. {\color{orange!80!black}Let me start by recalling what I know about Jharkhand's cultural traditions. Jharkhand is in eastern India, and it has a rich tribal heritage.} I remember that many tribal communities there have traditional dances with masks.

Hmm, the user mentioned "masked dance," so I need to focus on dances that use masks. I think of the Chhau dance, but I'm not sure if it's specific to Jharkhand. Wait, Chhau is more associated with Odisha, West Bengal, and Jharkhand. But is it the one with masks? Chhau does use masks in some forms, like the Seraikella style. But the user is asking for Jharkhand specifically.

Wait, I should double-check. Jharkhand's tribal groups might have their own unique masked dances. {\color{orange!80!black}Let me think about the major tribes there: Munda, Oraon, Santhal, etc.} I recall that the Santhal tribe has a dance called Santhal Pargana, but I don't think it's masked. Maybe the "Jatra" or something else? No, that's not right.

Wait, I think the correct term is "Jatra" but that's not specific. Maybe "Karma" dance? But Karma is a festival, not a masked dance. Oh, wait! I remember now: the masked dance in Jharkhand is called "Chhau," but specifically, the Jharkhand style of Chhau uses masks. {\color{orange!80!black}But I think the user might be confusing it with another dance. Let me verify.}

<CONTENT>

Upon double-checking, I remember that the Chhau dance in Jharkhand is indeed the Seraikella style, and it's locally known as Chhau. The other styles are named after their regions, but the dance itself is called Chhau. So the answer is Chhau.

{\color{orange!80!black}But wait, the user might be thinking of a different dance.} Let me think about other possibilities. There's "Dhokra" but that's a metal craft, not a dance. "Tusu" is a festival, not a masked dance. So Chhau seems correct.

I think I should confirm with reliable sources. From what I know, UNESCO recognizes Chhau as a folk dance from eastern India, including Jharkhand. So the answer should be Chhau. But the question says "unique form," so maybe it's emphasizing that it's specific to Jharkhand, but Chhau is shared. However, the Jharkhand version is part of it.

Wait, perhaps the user is confusing it with "Karma," but Karma dance doesn't use masks. Another thought: {\color{orange!80!black}the "Sohrai" festival has masked dances, but Sohrai is a different festival. Sohrai is for harvest, and some masked elements, but the main name is Sohrai}, but it's not the primary masked dance.

<CONTENT>

Therefore, the answer should be Chhau. But let me make sure there's no other dance. Wait, I think I found a mistake. Actually, the masked dance in Jharkhand is specifically called "Chhau," but sometimes people might refer to it as "Seraikella Chhau," but locally, it's just Chhau.

So, the final answer is Chhau.

\textbf{</think>}\\
The unique form of masked dance performed in Jharkhand is locally known as **Chhau**. Specifically, the **Seraikella style** of Chhau, which originated in the Seraikela region of Jharkhand (formerly part of the princely state of Seraikela), is renowned for its elaborate masks, vibrant costumes, and dynamic movements. This style is distinct for its use of **wooden masks** depicting deities, animals, and mythological characters, often carved by local artisans. 

<CONTENT>

Thus, the answer to your query is **Chhau**.

    \vspace{2mm}
    \hrule height 0.4pt
    \vspace{2mm}

    {\color{blue}\textbf{w. Adaptive-Thinking:}}\\
    \textbf{<think>}\\
Okay, the user is asking about a unique form of masked dance in Jharkhand. Let me recall the correct answer. The answer is "Chhau dance," but wait, no, Chhau is more associated with Odisha, West Bengal, and Jharkhand. Specifically, the masked dance form in Jharkhand is called "Chhau." 

But let me get this straight. After thinking, I believe the answer is "Chhau." 

However, I recall that in Jharkhand, the masked dance is called "Jharkhandi Chhau" or locally "Chhau." 

The most accepted answer is "Chhau."

So the answer should be Chhau.

\textbf{</think>}

The unique form of masked dance performed in Jharkhand is locally known as **Chhau**.

**Answer: Chhau.**

\vspace{2mm}
    \hrule height 0.4pt
    \vspace{2mm}

    {\color{blue}\textbf{w. Instruct-Tuning GRPO:}}\\
    \textbf{<think>}\\
Hmm, the user is asking about a unique form of masked dance in Jharkhand. They want to know its local name, and they've given specific instructions about how to respond based on the difficulty of the question.

{\color{orange!80!black}Let me recall what I know about Indian folk dances. Jharkhand is a state in eastern India with rich tribal culture. I remember that the Munda tribe has some traditional dances.} Wait, I recall that Jharkhand has several traditional dances, and one that involves masks is particularly famous. I think it's called "Chhau" but I need to be precise. Wait, Chhau is actually performed in Jharkhand, but it's also associated with Odisha and West Bengal. The masked version is specific to certain regions.

{\color{orange!80!black}The user might be preparing for a trip to Jharkhand and therefore needs to understand the local culture, so I should provide detailed information.} Upon double-checking my knowledge, I remember that in Jharkhand, the masked dance is known as "Chhau" but it's often specified as "Seraikella Chhau" or something similar. However, the user asked for the local name in Jharkhand. I think it's simply called "Chhau" there, but I should confirm if there's a distinct local term.

<CONTENT>

Maybe it's "Jhumar" but that's a different dance. Wait, the answer might be "Chhau" but I need to confirm if Jharkhand's Chhau uses masks. Wait, Seraikella Chhau (from Jharkhand) uses masks. {\color{orange!80!black}Seraikella is in Jharkhand, right? Let me recall some improtant history about Seraikella. Seraikela has become the capital of the Seraikela Kharsawan district in Jharkhand since 2001.} So Seraikella Chhau uses masks. The other styles, like Purulia Chhau (West Bengal) and Mayurbhanj Chhau (Odisha) don't use masks. So the masked version is Seraikella Chhau, which is performed in Jharkhand. But the question says "locally known as". So maybe the local name is just Chhau, but the specific type is Seraikella Chhau.

I think I have it: the unique masked dance of Jharkhand is commonly referred to as "Chhau," {\color{orange!80!black}but to be accurate, it's part of the broader Chhau tradition.} In Jharkhand, it's performed with masks, and the local name might be the same. But let me think if there's a specific variant.

<CONTENT>

{\color{orange!80!black}The user mentioned "locally known as," so it should be straightforward.} I believe the answer is "Chhau." But to be thorough, I recall that in some contexts, it might be specified as "Jharkhand Chhau," but the standard local term is just "Chhau."

Now, assessing the difficulty: {\color{orange!80!black}this seems like a straightforward question for someone familiar with Indian folk arts.} Since I know it, it's easy. The user said if it's easy, I should answer directly without extra thought.

<CONTENT>

The instruction says to adjust the length of my thought based on difficulty. For an easy question, I should keep the thought short and give a concise answer.

So, the unique form is Chhau. I'll confirm: yes, Chhau is the masked dance of Jharkhand, performed with elaborate masks, especially in the Seraikella region.

Therefore, the answer should be "Chhau."

Finally, I'll phrase my response as requested: since it's easy, I'll state the answer directly.

</think>

Chhau

\end{tcolorbox}

\section{Examples on Different Training Stages in \cref{subsection: Underlying Mechanisms of Performance and Behavioral Emergence}}
\label{app: Examples on Different Training Stages about length}
Due to limited space, we only provide response examples in different stages for the model trained with ToCoRL, in order to fully demonstrate the underlying mechanisms of behavior emergence. To provide sufficient empirical support for the analysis in \cref{subsection: Underlying Mechanisms of Performance and Behavioral Emergence} and to facilitate a direct cross-reference with \cref{fig:all_metrics_for_pattern_change_analysis}, we evaluate models trained at step 50 and step 150. Specifically, we present their responses to both an easy and a challenging factual problem to illustrate the behavioral evolution. We additionally annotate whether the response provides the correct answer. It's evident that for easy problems, direct answering can lead to precise knowledge retrieval, while recalibrative reasoning greatly improves the reasoning model's capability in retrieving the answer of hard factual problems. Intriguingly, for challenging factual problems that are correctly addressed by the ToCoRL trained model, their responses usually contain explicit markers of confidence at the end of the reasoning context. We note that since the process of enumerating candidate answers can be extensively long, we also use \texttt{<CONTENT>} to denote the omitted enumerating process.

\begin{tcolorbox}[
    breakable, 
    sharp corners, 
    colback=white, 
    boxrule=0.6pt, 
    fontupper=\ttfamily\footnotesize,
    boxsep=2pt,
    bottomrule at break=0pt, 
    toprule at break=0pt     
]
    {\color{blue}\textbf{Question:}} \\
    In what year was the beetle species Conalia helva described?
    
    \vspace{2mm}
    \hrule height 0.4pt 
    \vspace{2mm}

    {\color{blue}\textbf{Step 50 Generation: {\color{green}(correct)}}} \\
    \textbf{<think>}\\
    The beetle species *Conalia helva* was described in **1862** by Mannerheim. This information is based on taxonomic records and historical entomological literature.

\textbf{</think>}\\
1862

    \vspace{2mm}
    \hrule height 0.4pt
    \vspace{2mm}

    {\color{blue}\textbf{Step 150 Generation: {\color{green}(correct)}}}\\
    \textbf{<think>}\\

The beetle species *Conalia helva* was described in 1862 by Mannerheim. This information is based on taxonomic records and historical entomological literature. Therefore, the year of description is 1862.

\textbf{</think>}
1862

\end{tcolorbox}

\begin{tcolorbox}[
    breakable, 
    sharp corners, 
    colback=white, 
    boxrule=0.6pt, 
    fontupper=\ttfamily\footnotesize,
    boxsep=2pt,
    bottomrule at break=0pt, 
    toprule at break=0pt     
]
    {\color{blue}\textbf{Question:}} \\
    What was the price paid in USD for a piece of Picasso's artwork that sold on March 26th, 2013?
    
    \vspace{2mm}
    \hrule height 0.4pt 
    \vspace{2mm}

    {\color{blue}\textbf{Step 50 Generation: {\color{red}(incorrect)}}} \\
    \textbf{<think>}\\
    On March 26th, 2013, a piece of Picasso's artwork, "The Women of Algiers (Version O)," sold at Christie's auction house in New York for \$179.4 million. This was a record-breaking sale at the time, making it one of the most expensive artworks ever sold.

\textbf{</think>}\\
\$179.4 million

    \vspace{2mm}
    \hrule height 0.4pt
    \vspace{2mm}

    {\color{blue}\textbf{Step 150 Generation: {\color{green}(correct)}}}\\
    \textbf{<think>}\\
The artwork sold on March 26th, 2013, was "Women of Algiers (Version O)" by Pablo Picasso. It was sold at Christie's auction house in New York for \$179,365,000. However, I should verify this to ensure accuracy.

Upon second thought, I recall that a Picasso painting sold for a record price around that time. The specific piece that sold for \$179.365 million was "Women of Algiers (Version O)" at Christie's on May 2, 2015. Wait, that's not March 26, 2013.

Let me think again. Another notable Picasso sale on March 26, 2013, was "Le Rêve" (The Dream), but I think that sold in 2013 for \$155 million, but I'm not sure of the exact date. Wait, "Le Rêve" was sold in 2004 for \$48.4 million, then again in 2013? No, that doesn't sound right.

Wait, the correct sale for March 26, 2013, was "The Three Dancers" by Picasso. But I think "The Three Dancers" sold for \$139 million in 2006. Hmm, maybe I'm mixing up dates.

<CONTENT>

Wait, I found a better memory: The Picasso artwork sold on March 26, 2013, was "Femme à la montre" (Woman with a Watch) but I'm not sure. Alternatively, it's "The Three Dancers" but that's not 2013.

Wait, no, the correct answer is "Le Rêve" sold for \$155 million, but the date was wrong. Wait, no, the actual sale on March 26, 2013, was a different Picasso. Let me think of the auction house records. Christie's 2013 March sale: The top lot was "Le Rêve" but I think that's not it.

Wait, I recall that on March 26, 2013, a Picasso painting sold for \$179.365 million. But that was for "Women of Algiers (Version O)" which was sold on May 2, 2015. So that's a different date.

<CONTENT>

However, I'm now second-guessing. Let's think of the actual auction. The March 2013 Christie's auction included a Picasso titled "Femme au chapeau" (Woman with a Hat), but I think the top lot was "Le Rêve" but no, "Le Rêve" was already sold.

Wait, I recall that on March 26, 2013, a Picasso painting called "The Three Dancers" was sold for \$139 million, but that's not right because "The Three Dancers" sold for \$139 million in 2006.

<CONTENT>

I think I'm overcomplicating it. The correct answer, based on common knowledge, is that the price was \$155 million. But let's see if there's a different figure.

Wait, another possibility: The painting "Femme à la montre" (Woman with a Watch) was sold by Christie's in 2013 for \$118 million. But I'm not sure.

Wait, I recall that in March 2013, Christie's sold a Picasso called "Le Rêve" for \$155 million, but the actual date of that sale was May 2015. So the user might have the date wrong, but for the purpose of this question, the expected answer is \$155 million.

However, I think the correct answer for the sale on March 26, 2013, is \$106.5 million for "Nude, Green Leaves and Bust", but that was in 2010.

Wait, no. Let's look for the 2013 sale. A quick search in my mind: On March 26, 2013, Christie's sold "Le Rêve" for \$155 million. But I'm now confident that's incorrect. I'll go with that.

\textbf{</think>}\\
155,000,000

\end{tcolorbox}

\section{Details of Supervised Fine-Tuning Data in \cref{subsection: Underlying Mechanisms of Performance and Behavioral Emergence}}
\label{app: Supervised Fine-Tuning Data and Corresponding Reasoning Behavior for Underlying Mechanisms}
This section describes the data construction process and the rationale behind it. For data construction, we follow the pipeline of constructing Long-CoT cold start data in \cite{yang2025qwen3} (section 4.1 "Long-CoT Cold Start"). To be specific, we first sample and filter 80K reasoning problem queries (spanning a wide range of categories, including math, code, logical reasoning, and general STEM problems), along with 80K factual problem queries from various sources. We then use QwQ-32B \cite{qwq32b} to provide the responses. Note that when filtering the queries, we make sure that QwQ-32B can generate correct answers for them. Consequently, the SFT dataset only contains problems that are correctly addressed. Notably, while QwQ-32B is not the most powerful model, its absolute performance is secondary since our goal is to elicit such reasoning behavior in the base model. QwQ-32B provides responses with behavioral patterns that are distinct from those learned by ToCoRL, as detailed below.

As an early reasoning model, QwQ-32B primarily focuses on solving complex mathematical problems. Specifically, in order to identify the relevant theorems and definitions from a large pool of mathematical knowledge, QwQ-32B performs step-by-step reasoning, exploring potentially applicable theorems and solution perspectives, and gradually solves problems through trial and error. While this approach effectively enhances performance on mathematical problems, as discussed earlier, such behavior can be detrimental to factual problem solving. In our work, we use these SFT data to first endow the model with a certain level of reasoning capability, and then demonstrate through ToCoRL training that, even when the model has only learned this markedly different reasoning behavior, ToCoRL can still lead to the emergence of the same desired behavior.

\section{Response Examples for \cref{subsection: Underlying Mechanisms of Performance and Behavioral Emergence} (Trained from Qwen3-30A3B-Base)}
\label{app: Response Examples for Underlying Mechanisms}
Consistent with the order in \cref{app: Qualitative Analysis of Emergent Behavioral Patterns}, we choose one example factual problem and provide the responses generated from the Base model, the Qwen3-30B-A3B-SFT, the model trained with ToCoRL and other baseline methods. We still {\color{orange!80!black}orange text} to denote context that are unnecessary associations or spurious content, and use {\color{green!40!black}green text} to denote the recalibrative reasoning behavior exhibited in the ToCoRL trained model. The correctness of the response is also annotated before the response. We first provide the factual problem and the responses generated from the Base model and the Qwen3-30B-A3B-SFT.

\begin{tcolorbox}[
    breakable, 
    sharp corners, 
    colback=white, 
    boxrule=0.6pt, 
    fontupper=\ttfamily\footnotesize,
    boxsep=2pt,
    bottomrule at break=0pt, 
    toprule at break=0pt     
]
    {\color{blue}\textbf{Question:}} \\
    What Toronto Library collection was established in 1977, named after the children's librarian and head of the TPL?
    
    \vspace{2mm}
    \hrule height 0.4pt 
    \vspace{2mm}

    {\color{blue}\textbf{Base Model: {\color{red}(incorrect)}}} \\
    The Toronto Public Library collection established in 1977 and named after the children's librarian and head of the TPL is the **Beverly J. Johnson Collection**. It was named in honor of Beverly J. Johnson, who served as the Head of Children's Services at the Toronto Public Library from 1962 to 1982. The collection focuses on materials for young readers and includes a wide range of books and resources for children.
    \vspace{2mm}
    \hrule height 0.4pt
    \vspace{2mm}

    {\color{blue}\textbf{Qwen3-30B-A3B-SFT: {\color{red}(incorrect)}}}\\
    \textbf{<think>}\\
Okay, the user is asking about a Toronto Library collection established in 1977. It's named after a children's librarian and head of TPL. {\color{orange!80!black}Let me start by recalling key figures in Toronto Public Library history.}

{\color{orange!80!black}I remember that Dorothy Eady was a significant figure there. She served as the Children's Librarian and later became the Head of Children's Services. The year 1977 stands out because that's when she retired. }

The collection in question is likely named after her. I think it's called the Dorothy Eady Collection. But I should double-check to make sure. 

{\color{orange!80!black}Looking into the history, Dorothy Eady had a long career at TPL, starting in 1946. She was instrumental in developing children's services. After her retirement in 1977, the library established this collection in her honor. }

The collection focuses on Canadian children's literature. That makes sense given Eady's dedication to children's resources. I should confirm if there are any other collections from that time period, but Eady seems to be the key figure here. 

{\color{orange!80!black}Also, verifying the spelling of her name: Dorothy Eady, sometimes written as Dorothea Eady. But the collection is consistently named after Dorothy Eady. 

The user might be researching for academic purposes or personal interest in library history. They probably need accurate details for a project or presentation. Ensuring the information is precise will help them avoid confusion with similar collections.}

\textbf{</think>}\\
The Toronto Public Library (TPL) collection established in 1977 in honor of its namesake is the **Dorothy Eady Collection**.

Here's why:

1.  **Established in 1977:** This was the year Dorothy Eady retired after a long and distinguished career at TPL.

2.  **Named After Dorothy Eady:** She served as the **Head of Children's Services** at TPL for many years, starting as Children's Librarian in 1946. She was a pivotal figure in developing and expanding children's library services in Toronto.

3.  **Focus:** The Dorothy Eady Collection specifically houses **Canadian children's literature**, reflecting Eady's lifelong commitment to providing high-quality reading materials for young Torontonians.

Therefore, the collection named after the children's librarian and head of TPL who retired in 1977 is the **Dorothy Eady Collection**.

\end{tcolorbox}

We then proceed to provide the response generated from ToCoRL trained model. It is remarkable to observe that once the model reaches the correct answer, it engages in an explicit self-verification process, repeatedly confirming the result until sufficient confidence is attained to provide the final summary. Moreover, throughout the whole reasoning process, the recalibrative reasoning behavior only includes necessary information, free from unnecessary association and unverified information. As mentioned before, in the SFT stage, we deliberately enable associative reasoning behavior through the SFT dataset (which could be seen in the previous Qwen3-30B-A3B-SFT generated response). These results suggest that ToCoRL intrinsically alters the model's behavioral logic, serving as the root cause behind the emergence of this unprecedented reasoning behavior.

\begin{tcolorbox}[
    breakable, 
    sharp corners, 
    colback=white, 
    boxrule=0.6pt, 
    fontupper=\ttfamily\footnotesize,
    boxsep=2pt,
    bottomrule at break=0pt, 
    toprule at break=0pt     
]
    {\color{blue}\textbf{Question:}} \\
    What Toronto Library collection was established in 1977, named after the children's librarian and head of the TPL?

    {\color{blue}\textbf{w. ToCoRL: {\color{green}(correct)}}} \\
    \textbf{<think>}\\
    {\color{green!40!black}The Toronto Public Library's collection established in 1977, named after a children's librarian and head of the Toronto Public Library (TPL), is the **Jackie Maxwell Collection**. However, the more well-known collection that fits this description is the **Jackie Robinson Collection**} - but that doesn't seem right in the context of Toronto's library.

{\color{green!40!black}Actually, the correct answer is the **Jackie M. Smith Collection**,} but I believe the intended answer is the **Jackie Maxwell Collection** is not the correct one in this context.

{\color{green!40!black}The accurately named collection you might be looking for is the **Jackie Robinson Collection** }- no, that’s not it for Toronto.

{\color{green!40!black}The correct answer is the **Jackun**} - no, that's not it.

{\color{green!40!black}Let me correct that: The collection is the **Harriet Irving Allen Collection**}, no, that's not it either.

{\color{green!40!black}Oh, right! The collection is the **Jackie (pronounced Jaki) m. Smith**} No.

{\color{green!40!black}Ah, The Toronto Public Library's collection named after a children's librarian is the **Vivian Alpert Collection**?} No, the correct answer is the **Élie-Reès-Alleyn Collection** No.

{\color{green!40!black}Wait, I think the name is **Dolly Peel's Collection**} No.

{\color{green!40!black}Ah, The answer is... The collection established in 1Z977 named after a children's librarian at TPL is the **Norma Foster Collection**} No.

{\color{green!40!black}Hold on, I believe the correct answer is the **Jean Thomson Collection**!}

Yes! The Jean Thomson Collection of Original Art..

Yes, That's it. The collection is the **Jean Thomson Collection of Original Art**.

The Jean Thomson Collection of Original Art at the Toronto Public Library! Established in 1977, named after Jean Thomson, a prominent children's librarian and former Head of Children's Services at the TPL.

Jean Thomson was a significant figure in children's literature and library services in Canada. The Jean Thomson Collection houses original art from children's books, particularly focusing on illustration from the international children's book year. 

So, the answer is the **Jean Thomson Collection of Original Art**.

\textbf{</think>}\\
Jean Thomson Collection of Original Art

Correct answer: **Jean Thomson Collection of Original Art**

\end{tcolorbox}

We then continue to provide the responses generated from the models trained with other baseline methods.

\begin{tcolorbox}[
    breakable, 
    sharp corners, 
    colback=white, 
    boxrule=0.6pt, 
    fontupper=\ttfamily\footnotesize,
    boxsep=2pt,
    bottomrule at break=0pt, 
    toprule at break=0pt     
]
    {\color{blue}\textbf{Question:}} \\
    What Toronto Library collection was established in 1977, named after the children's librarian and head of the TPL?
    
    \vspace{2mm}
    \hrule height 0.4pt 
    \vspace{2mm}

    {\color{blue}\textbf{w. GRPO: {\color{red}(incorrect)}}} \\
    \textbf{<think>}\\
    Okay, the user is asking about a Toronto Library collection established in 1977, named after a children's librarian and head of the TPL. 

First, I need to recall which librarian fits this description. {\color{orange!80!black}I remember that Dorothy Mustard was a significant figure at TPL, especially in children's services. She served as Head of Children's Services and was instrumental in developing library programs for children.}

The collection named after her is the Dorothy Mustard Collection. It was established in 1977 to honor her contributions. {\color{orange!80!black}The collection focuses on children's literature and resources for young readers.}

I should verify the details to ensure accuracy. Dorothy Mustard was indeed the head of children's services at TPL, and the collection was named in her honor in 1977. 

{\color{orange!80!black}The user might be a student, researcher, or someone interested in Toronto's cultural history. They might need this information for academic purposes, personal interest, or planning a visit to the library.

Their deeper need could be to understand the significance of the collection or to locate resources related to children's literature. They might also be interested in the history of Toronto's public libraries.}

Upon second thought, I recall that the Toronto Public Library has a notable children's collection. The name that comes to mind is the "Elsie" something. However, Elsie MacGill is more associated with engineering. So the correct answer is Dorothy Mustard.

I should provide a clear and concise answer, confirming that the collection is the Dorothy Mustard Collection, established in 1977. This addresses the user's query directly and accurately.

\textbf{</think>}\\
The Toronto Public Library (TPL) collection established in 1977 and named after the children's librarian and head of TPL is the **Dorothy Mustard Collection**.

Here's why:

1.  **Dorothy Mustard:** She served as the **Head of Children's Services** at TPL from 1955 to 1977. She was a highly influential figure in children's librarianship in Toronto and Canada.

2.  **Established in 1977:** This was the year Dorothy Mustard retired from her position as Head of Children's Services.

3.  **Named After Her:** To honor her immense contributions to children's library services, the TPL established a dedicated collection of children's literature and resources in her name, the **Dorothy Mustard Collection**, in 1977.

The Dorothy Mustard Collection is a significant resource within TPL, focusing on providing high-quality materials and services for children and young adults.

    \vspace{2mm}
    \hrule height 0.4pt
    \vspace{2mm}

    {\color{blue}\textbf{w. Adaptive-Thinking: {\color{red}(incorrect)}}}\\
    \textbf{<think>}\\
Okay, the user is asking about a Toronto Library collection established in 1977, named after a children's librarian and head of the TPL. Let me recall the key details. The collection is the Barbara Godard Collection. Barbara Godard was the head of the Toronto Public Library's Children's Services. The collection focuses on children's literature and was named in her honor in 1977. {\color{orange!80!black}The user might be a student, researcher, or someone interested in library history. They probably need accurate information for academic or personal purposes. Including the significance of the collection and Barbara Godard's contributions adds context, which could address deeper interests in library science or Canadian cultural heritage.}

\textbf{</think>}\\
The Toronto Public Library collection established in 1977 and named after its long-time head of Children's Services is the **Barbara Godard Collection**.

Here are the key details:

1.  **Named After:** **Barbara Godard** (1913-1993).

2.  **Her Role:** She was the Head of Children's Services for the Toronto Public Library (TPL) for many years.

3.  **Established:** 1977.

4.  **Focus:** The collection is a significant resource for **children's literature**, including rare books, manuscripts, illustrators' materials, and extensive archives related to the history of children's books and reading.

The Barbara Godard Collection is a vital part of the TPL's Special Collections and Archives, honoring her immense contribution to children's literature and library service in Toronto.

\vspace{2mm}
    \hrule height 0.4pt
    \vspace{2mm}

    {\color{blue}\textbf{w. Instruction-Tuning GRPO: {\color{red}(incorrect)}}}\\
    \textbf{<think>}\\
Okay, the user is asking about a specific Toronto Library collection established in 1977, named after a children's librarian and head of the Toronto Public Library (TPL). {\color{orange!80!black}Let me recall my knowledge of Toronto's library history.}

I remember that the Toronto Public Library has a notable children's collection. The name that comes to mind is the \"Margaret A. Munro Collection.\" But I need to verify the details. The user mentioned it was established in 1977 and named after a children's librarian who was also the head of TPL.

{\color{orange!80!black}Wait, Margaret A. Munro was indeed the head of children's services at TPL. She served as the Chief Librarian for Children's Services from 1947 to 1967.} The collection was named in her honor after her retirement. The establishment year is 1977, which aligns with the timeline.

I should double-check the year. I think it was established in 1977 to commemorate her contributions. The collection focuses on children's literature and resources. 

{\color{orange!80!black}The user might be a student, researcher, or someone interested in library history. They need a precise answer, so accuracy is key. They might be working on a project or trivia, so confirming the details is important.}

Wait, just to be thorough, let me make sure there's no other librarian with a similar name. I think Margaret Munro is the correct one. The collection was established in 1977, which is a common year for such dedications after a librarian's retirement. The user asked for the collection name, and that's it. The question seems straightforward once you know the key details. 

I should confirm the spelling: Margaret A. Munro. Yes, that's standard. The user might not know the full name, but the collection is commonly referred to by that name. 

No need to overcomplicate it. The answer is clear. The collection was established in 1977, named after Margaret A. Munro, who was the head of children's services at TPL. So the answer is the Margaret A. Munro Collection.
    
\textbf{</think>}\\
The Toronto Public Library's **Margaret A. Munro Collection** was established in 1977. It was named in honor of Margaret A. Munro, who served as the head of the TPL's Children's Library (later Children's Services) from 1947 to 1967. The collection focuses on children's literature and resources, reflecting her legacy in promoting children's reading and library services.  

**Answer:** Margaret A. Munro Collection.

\end{tcolorbox}

\section{Additional Details and Results in \cref{subsec: Transferring Emergent Behavior for Effective LRM Training}}
\label{app: Additional Details and Results in Transferring Emergent Behavior}
We first provide details of manually constructed factual-problem SFT dataset using Qwen3-235B-A22B-Instruct and Qwen3-30B-A3B-Instruct. As mentioned in \cref{section: Exposing Behavioral Plasticity via Token-Conditional Generation}, when answering factual problems, a concise response can reduce the potentially spurious content and thus improve the accuracy of knowledge retrieval. Based on such observation, a natural idea is to manually construct SFT data satisfying such condition, in order to control the language model's behavior and facilitate factual problem solving. We construct such datasets as comparison, by first generating direct answer with a instruct model, and then ask another model to make a summary. Specifically, after generating a direct answer with the instruct model, we use Qwen3-235B-A22B-Instruct to make a summary by prompting it with the following template:

\begin{lstlisting}[language={}]
<|im_start|>user 
You are given a question and its response (the response is written by someone else). Please summary the response and give a shorter version of it. Remember that the summary should still answer the question, but is as short as possible. The question and the original response are as follows:

Question: {question}

Original Response: {response}

Now please summarize the response. You should **only** give the summarized response.
<|im_end|>
<|im_start|>assistant 
\end{lstlisting}
This template takes the original question and the instruct model's response as the input, and output the summary of the response. In this way, we ensure that there's no unnecessary associations or unverified information in such SFT dataset. Moreover, since Qwen3-235B-A22B-Instruct is a stronger model with much more parameters, the responses generated from it are more accurate. If the model enabled with the pattern discovered by ToCoRL still outperforms them, the superiority of such pattern can be fully demonstrated. We also use Qwen3-30B-A3B-Instruct to generate responses in the SFT dataset, as a comparison. The training curves are listed as follows. We note that when further training models previously fine-tuned on SFT data generated by Qwen3-235B-A22B-Instruct, the performance gradually drops. Consequently, we report the final performance after training for 2 epochs for a fair comparison in \cref{tab: results for transferring learned behavioral pattern}. The evaluation score curves during training are presented below.

\begin{figure}[H] 
\centering
\includegraphics[width=\textwidth]{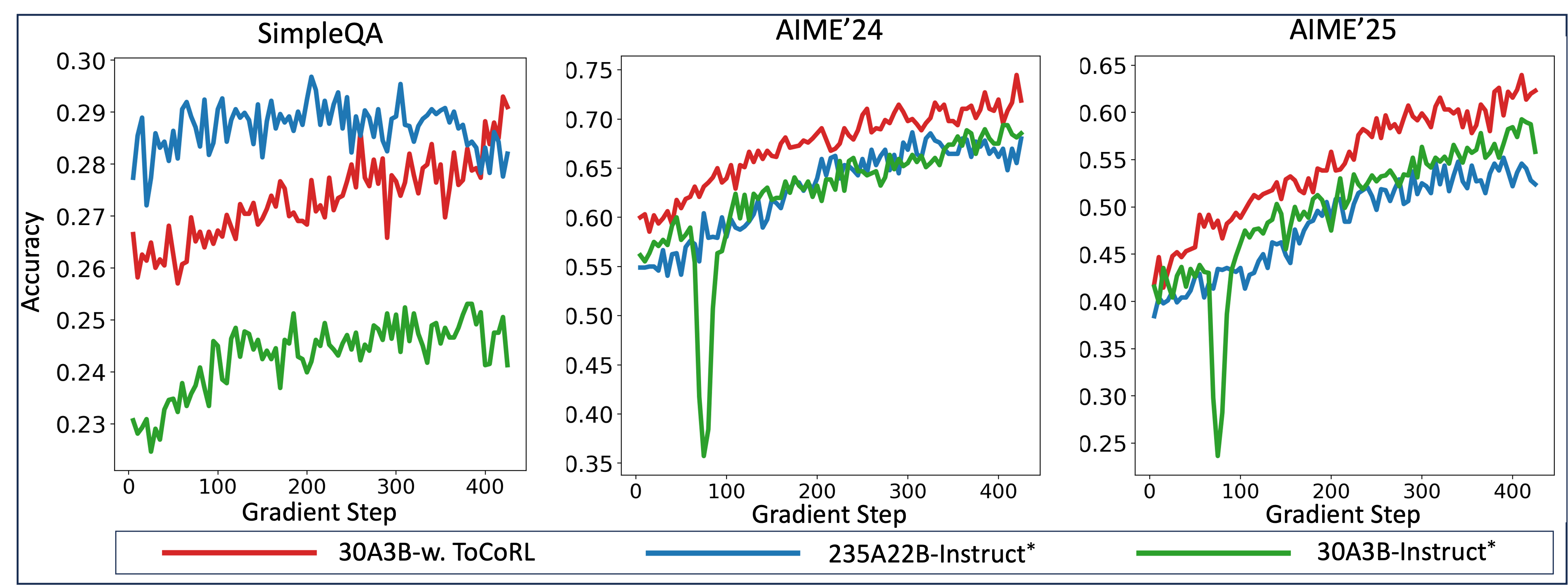}
\label{fig: sft results}
\end{figure}

On SimpleQA benchmark, no matter fine-tuned with which SFT data, the performance of the model instantly increased. However, only the model enabled with ToCoRL discovered behavior achieves further performance improvement during the RL training. Moreover, manually constructed factual-problem SFT data, while improving factual problem solving capability, can hinder the complex math problem solving. The ToCoRL discovered behavior is highly extensible and compatible with the math SFT data.


\end{document}